\documentclass{article}

\usepackage{comment}
\excludecomment{draft}
\excludecomment{nextversion}

\usepackage[utf8]{inputenc} %
\usepackage[T1]{fontenc}    %
\usepackage{hyperref}       %
\usepackage{booktabs}       %

\usepackage{natbib}
\usepackage{babel}

\usepackage{amsthm}
\usepackage{amssymb}
\usepackage{amsfonts} %
\usepackage{thmtools,thm-restate}
\usepackage{mathtools} %
\usepackage{mathcommand}
\usepackage{nicefrac}       %
\usepackage{etoolbox}
\usepackage[bb=dsserif]{mathalpha}
\usepackage{cleveref}

\usepackage[dvipsnames]{xcolor}
\usepackage{tikz}
\newcommand*\Circled[1]{\tikz[baseline=(char.base)]{
		\node[shape=circle,draw,inner sep=2pt] (char) {#1};}}

\usepackage{pifont}
\usepackage{graphicx}
\usepackage{booktabs} %
\usepackage{fancyhdr}
\usepackage{multirow}

\usepackage{url}            %
\usepackage{microtype}
\usepackage{graphicx}
\usepackage{subcaption} 
\usepackage{placeins} %

\usepackage{enumitem}
\setitemize{noitemsep,topsep=0pt,parsep=0pt,partopsep=0pt}
\setenumerate{noitemsep,topsep=0pt,parsep=0pt,partopsep=0pt}

\graphicspath{{./figs/}}

\newtheorem{theorem}{Theorem}[section]

\newtheorem{definition}[theorem]{Definition}
\newtheorem{corollary}[theorem]{Corollary}

\newtheorem{proposition}[theorem]{Proposition}

\DeclareMathOperator{\opExpectation}{\mathbb{E}}
\newcommand{\E}[2]{\opExpectation_{#1} \left [ \ifblank{#2}{\:\cdot\:}{#2} \right ]}
\newcommand{\simpleE}[1]{\opExpectation_{#1}} %

\providecommand\given{\MidSymbol[\vert]}

\newcommand\MidSymbol[1][]{%
\nonscript\:#1
\allowbreak
\nonscript\:
\mathopen{}}

\DeclareMathOperator{\opInformationContent}{h}
\DeclarePairedDelimiterXPP{\ICof}[1]{\opInformationContent}{(}{)}{}{%
    \ifblank{#1}{\:\cdot\:}{#1}
}

\DeclareMathOperator{\opEntropy}{H}
\DeclarePairedDelimiterXPP{\Hof}[1]{\opEntropy}{[}{]}{}{%
    \renewcommand\given{\MidSymbol[\delimsize\vert]}
    \ifblank{#1}{\:\cdot\:}{#1}
}
\DeclarePairedDelimiterXPP{\xHof}[1]{\opEntropy}{(}{)}{}{%
    \ifblank{#1}{\:\cdot\:}{#1}
}

\DeclareMathOperator{\opMI}{I}
\DeclarePairedDelimiterXPP{\MIof}[1]{\opMI}{[}{]}{}{%
    \renewcommand\given{\MidSymbol[\delimsize\vert]}
    \ifblank{#1}{\:\cdot\:}{#1}
}

\DeclarePairedDelimiterXPP{\CrossEntropy}[2]{\opEntropy}{(}{)}{}{%
    \ifblank{#1#2}{\:\cdot\: \MidSymbol[\delimsize\Vert] \:\cdot\:}{#1 \MidSymbol[\delimsize\Vert] #2}
}

\DeclareMathOperator{\opKale}{D_\mathrm{KL}}
\DeclarePairedDelimiterXPP{\Kale}[2]{\opKale}{(}{)}{}{%
    \ifblank{#1#2}{\:\cdot\: \MidSymbol[\delimsize\Vert] \:\cdot\:}{#1 \MidSymbol[\delimsize\Vert] #2}
}

\DeclareMathOperator{\opp}{p}
\DeclarePairedDelimiterXPP{\pof}[1]{\opp}{(}{)}{}{%
    \renewcommand\given{\MidSymbol[\delimsize\vert]}
    \ifblank{#1}{\:\cdot\:}{#1}
}

\DeclarePairedDelimiterXPP{\pcof}[2]{\opp_{#1}}{(}{)}{}{%
    \renewcommand\given{\MidSymbol[\delimsize\vert]}
    \ifblank{#2}{\:\cdot\:}{#2}
}

\DeclarePairedDelimiterXPP{\hpcof}[2]{\hat{\opp}_{#1}}{(}{)}{}{%
    \renewcommand\given{\MidSymbol[\delimsize\vert]}
    \ifblank{#2}{\:\cdot\:}{#2}
}

\DeclareMathOperator{\opq}{q}
\DeclarePairedDelimiterXPP{\qof}[1]{\opq}{(}{)}{}{%
    \renewcommand\given{\MidSymbol[\delimsize\vert]}
    \ifblank{#1}{\:\cdot\:}{#1}
}

\DeclarePairedDelimiterXPP{\qcof}[2]{\opq_{#1}}{(}{)}{}{%
    \renewcommand\given{\MidSymbol[\delimsize\vert]}
    \ifblank{#2}{\:\cdot\:}{#2}
}

\DeclarePairedDelimiterXPP{\varHof}[2]{\opEntropy_{\ifblank{#1}{\:\cdot\:}{#1}}}{[}{]}{}{%
    \renewcommand\given{\MidSymbol[\delimsize\vert]}
    \ifblank{#2}{\:\cdot\:}{#2}
}

\DeclarePairedDelimiterXPP{\xvarHof}[2]{\opEntropy_{\ifblank{#1}{\:\cdot\:}{#1}}}{(}{)}{}{%
    \renewcommand\given{\MidSymbol[\delimsize\vert]}
    \ifblank{#2}{\:\cdot\:}{#2}
}

\newcommand{\Dtrain}{{\mathcal{D}^\text{train}}}

\newcommand{\Dpool}{{\mathcal{D}^\text{pool}}}

\newcommand{\w}{\omega}
\newcommand{\W}{\Omega}

\newcommand{\xeval}{x^\text{eval}}
\newcommand{\xtest}{x^\text{test}}
\newcommand{\xtrain}{x^\text{train}}
\newcommand{\xbatch}{x^\text{acq}}
\newcommand{\xpool}{x^\text{pool}}

\newcommand{\Xeval}{X^\text{eval}}
\newcommand{\Xtest}{X^\text{test}}
\newcommand{\Xtrain}{X^\text{train}}
\newcommand{\Xbatch}{X^\text{acq}}
\newcommand{\Xpool}{X^\text{pool}}

\newcommand{\yeval}{y^\text{eval}}
\newcommand{\ytest}{y^\text{test}}
\newcommand{\ytrain}{y^\text{train}}
\newcommand{\ybatch}{y^\text{acq}}
\newcommand{\ypool}{y^\text{pool}}

\newcommand{\Ytest}{Y^\text{test}}
\newcommand{\Ytrain}{Y^\text{train}}
\newcommand{\Yeval}{Y^\text{eval}}
\newcommand{\Ybatch}{Y^\text{acq}}
\newcommand{\Ypool}{Y^\text{pool}}

\newcommand{\xevalset}{\{x^\text{eval}_i\}_i}
\newcommand{\xtestset}{\{x^\text{test}_i\}_i}
\newcommand{\xtrainset}{\{x^\text{train}_i\}_i}
\newcommand{\xbatchset}{\{x^\text{acq}_i\}_i}
\newcommand{\xpoolset}{\{x^\text{pool}_i\}_i}

\newcommand{\Xevalset}{\{X^\text{eval}_i\}_i}
\newcommand{\Xtestset}{\{X^\text{test}_i\}_i}
\newcommand{\Xtrainset}{\{X^\text{train}_i\}_i}
\newcommand{\Xbatchset}{\{X^\text{acq}_i\}_i}
\newcommand{\Xpoolset}{\{X^\text{pool}_i\}_i}

\newcommand{\yevalset}{\{y^\text{eval}_i\}_i}
\newcommand{\ytestset}{\{y^\text{test}_i\}_i}
\newcommand{\ytrainset}{\{y^\text{train}_i\}_i}
\newcommand{\ybatchset}{\{y^\text{acq}_i\}_i}
\newcommand{\ypoolset}{\{y^\text{pool}_i\}_i}

\newcommand{\Ytestset}{\{Y^\text{test}_i\}_i}
\newcommand{\Ytrainset}{\{Y^\text{train}_i\}_i}
\newcommand{\Yevalset}{\{Y^\text{eval}_i\}_i}
\newcommand{\Ybatchset}{\{Y^\text{acq}_i\}_i}
\newcommand{\Ypoolset}{\{Y^\text{pool}_i\}_i}

\newcommand{\xset}{\{x_i\}_i}

\newcommand{\yset}{\{y_i\}_i}

\newcommand{\xtrainsetfull}{\{x^\text{train}_i\}_{i \in \{1, \dots, |\Dtrain|\}}}
\newcommand{\xpoolsetfull}{\{x^\text{pool}_i\}_{i \in \{1, \dots, |\Dpool|\}}}

\newcommand{\ytrainsetfull}{\{y^\text{train}_i\}_{i \in \{1, \dots, |\Dtrain|\}}}

\newcommand{\xsetfull}{\{x_i\}_{i \in I}}

\newcommand{\ysetfull}{\{y_i\}_{i \in I}}

\newcommand{\N}{\mathcal{N}}
\newcommand{\normaldist}[2]{\N(#1,\,#2)}

\DeclareMathOperator*{\argmax}{arg\,max}

\newcommand{\defeq}{\vcentcolon=}

\newcommand{\andreas}[1]{}
\newcommand{\yarin}[1]{}
\newcommand{\editor}[1]{}

\begin{draft}
\renewcommand{\andreas}[1]{{\leavevmode\color{blue}{\footnotesize AK} {\tiny says: }#1}}
\renewcommand{\yarin}[1]{{\leavevmode\color{purple}{\footnotesize YG}{\tiny says}[[#1]]}}
\renewcommand{\editor}[1]{{\leavevmode\color{blue}{\footnotesize Editor} {\tiny says: }#1}}
\end{draft}

\usepackage[accepted]{icml2021_arxiv}

\everypar{\looseness=-1}
\allowdisplaybreaks

\newcommand{\mytitle}{A Practical \& Unified Notation for Information-Theoretic Quantities in ML}

\icmltitlerunning{\mytitle}

\begin{document}

\twocolumn[
\icmltitle{\mytitle}

\icmlsetsymbol{equal}{*}

\begin{icmlauthorlist}
\icmlauthor{Andreas Kirsch}{oatml}
\icmlauthor{Yarin Gal}{oatml}
\end{icmlauthorlist}

\icmlaffiliation{oatml}{OATML, University of Oxford}

\icmlcorrespondingauthor{Andreas Kirsch}{andreas.kirsch@cs.ox.ac.uk}

\icmlkeywords{Machine Learning, ICML}

\vskip 0.3in
]

\printAffiliationsAndNotice{}  %

\begin{abstract}
A practical notation can convey valuable intuitions and concisely express new ideas.
Information theory is of importance to machine learning, but the notation for information-theoretic quantities is sometimes opaque.
We propose a practical and unified notation and extend it to include information-theoretic quantities between observed outcomes (events) and random variables. This includes the point-wise mutual information known in NLP and mixed quantities such as specific surprise and specific information in the cognitive sciences and information gain in Bayesian optimal experimental design. 
We apply our notation to prove a version of Stirling's approximation for binomial coefficients mentioned by \citet{mackay2003information} using new intuitions. 
\yarin{you can comment here on unifying also w NLP's pointwise MI and extending (?) to mixed pointwise MI also instead of "we extend" you can use language like "we bridge IT developments in cognitive sciences and IT applications in ML"}%
We also concisely rederive the evidence lower bound for variational auto-encoders and variational inference in approximate Bayesian neural networks.
Furthermore, we apply the notation to a popular information-theoretic acquisition function in Bayesian active learning which selects the most informative (unlabelled) samples to be labelled by an expert and extend this acquisition function to the core-set problem with the goal of selecting the most informative samples \emph{given} the labels.
\end{abstract}

\yarin{"Bridging IT across cognitive sciences, NLP, and ML"?}

\section{Introduction}
Information theory has provided insights for deep learning: information bottlenecks explain objectives both for supervised and unsupervised learning of high-dimensional data \citep{shwartz2017opening,kirsch2020unpacking,jonsson2020convergence};\yarin{add citations to all} similarly, information theory has inspired Bayesian experiment design, Bayesian optimization, and active learning as well as provided inspiration for research into submodularity in general \citep{lindley1956measure, foster2019variational, kirsch2019batchbald}.
\andreas{add citations}

\yarin{perhaps start the para with:
But IT can be opaque to ML researchers, leading to misunderstandings and misuse [give examples]
Then say that a practical notation can accelerate research, here we propose a new practical notation, we give examples in a bunch of domains }
A practical notation conveys valuable intuitions and concisely expresses new ideas.
The currently employed notation in information theory, however, can be ambiguous for more complex expressions found in applied settings and often deviates\yarin{is this related to "unified" contribution?
perhaps add at the start of the para (problem setting) 
"Further, many papers use specialised notation which makes it difficult to draw connections between contributions"
and give examples for that in your chapters below as well} between published works because
researchers are from different backgrounds such as statistics, computer science, information engineering, which all use information theory.
For example, $H(X,Y)$ is sometimes used to denote the \emph{cross-entropy} between $X$ and $Y$, which conflicts with common notation of the joint entropy $H(X, Y)$ for $X$ and $Y$, or it is not clarified that $\Hof{X \given Y}$ as conditional entropy of $X$ given $Y$\yarin{instead of "not clarified" say that same authors use this to refer to an exp over Y, while others use the same expression to denote conditionining on a realisation [again, give examples in text below]

also, if your contribution is "unifying", I'd expect to see you discussing multiple existing high profile papers in the field, highlight contradicting notation, and recast them in your unified notation (and give insight for what we gained from this)} is an expectation over $Y$.
We present a disambiguated and consistent notation while striving to stay close to known notation when possible. 

In addition, we show that an extension of information-theoretic quantities to relations between observed outcomes (events) and random variables can be of great use in machine learning. 
Commonly, the mutual information $\MIof{X; Y}$ is only defined for random variables $X, Y$, while in natural language processing the point-wise information \citep{church1990word} has been introduced for two outcomes. This follows earlier work in information theory by \citet{fano1962transmission}, which also considers a natural extension to the mutual information between an outcome $x$ and a random variable $Y$, referred to as `\emph{conditional average of the [point-wise] mutual information}'. 
Variants of this have been used more recently in the cognitive sciences and neuroscience as `\emph{(response\nobreakdash-)\allowbreak{}specific information}' and `\emph{specific surprise}' \citep{deweese1999measure, williams2011information}, but they might not be well-known outside of neuroscience and the cognitive sciences. Our consistent extension also unifies these two previously separate quantities.\yarin{are you further extending these two, or are you bringing this into ML?}
\andreas{note that deweese1999measure does all the hard lifting already! specific information = information gain and specific surprise = what we call information-theoretic surprise!}

As an application for information quantities on observed outcomes, we present a different and intuitive derivation of Stirling's approximation for binomial coefficients. The original deduction is found in \citet{mackay2003information} on page 2. 
Furthermore, we show this allows for a simple analysis of the approximation error.

As another application of the notation,\yarin{you can ask Tom for feedback on that section} we derive the evidence lower bound (ELBO) from \citet{kingma2014autoencoding} in a single (relatively long) line.

And, finally, as an application for mutual information terms that include observed outcomes, we examine the core-set problem which consists of selecting the most informative samples of a training set given the labels and provide new results. We also rederive\yarin{move this up to the VAE para?} the evidence-lower-bound inequality for variational inference of approximate Bayesian neural networks using our proposed notation.

The goal of this is to illustrate that our proposed notation is useful and show\yarin{allows us to easily take ideas from one application domain and extend them to solve new problems.} that it allows for more concise expression of important ideas.

Concretely, for the last example, we examine BALD (Bayesian Active Learning by Disagreement), an acquisition function in Bayesian active learning \citep{gal2017deep,houlsby2011bayesian}, and extend it to the core-set problem.
In \emph{pool}-based active learning, we have access to a huge reservoir of unlabelled data in a \emph{pool set} and iteratively select samples from this pool set to be labeled by an \emph{oracle} (e.g.\ human experts) to increase the model performance as quickly as possible. \emph{Acquisition functions} are used to score all pool samples and the highest scorer is acquired. The goal of an acquisition function is to score the most ``informative'' samples the highest. BALD maximizes the expected information gain $\MIof{\W; Y \given x}$ of the model parameters $\W$ given the prediction variable $Y$ for a candidate sample $x$ from the pool set.
It is equivalent to the concept of reduction in posterior uncertainty known from Bayesian optimal experimental design \citep{lindley1956measure}.
The \emph{core-set problem} on the other hand consists of identifying the most informative samples \emph{given} the labels, the \emph{core set}, such that training a model on this core set will perform as well as a model trained on the whole dataset.\yarin{btw Andreas Krause has a low of work on coresets

eg the stuff Mario Lucic worked on}
We examine the connection between BALD and information gain in a case where the information gain equals the information-theoretic surprise, which we define later. As such, we introduce \emph{Core-Set by Disagreement (CSD)}, which maximizes the information gain of the model parameters given the true label $y$ of a sample $x$ in the dataset.

\andreas{can we check with Krause et al whether infogain maximization is not new? it seems trivial but not sure people have actually applied it because it is not easy to compute}

\section{A Practical \& Unified Notation}
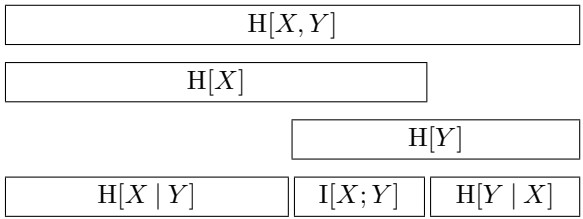
\begin{figure}[t]
  \begin{center}
  \setlength{\unitlength}{1in}
  \begin{picture}(3,1.10)(0,-0.2)
  \put(0,0.7){\framebox(3,0.20){\(\Hof{X,Y}\)}}
  \put(0,0.4){\framebox(2.2,0.20){\(\Hof{X}\)}}
  \put(1.5,0.1){\framebox(1.5,0.20){\(\Hof{Y}\)}}
  \put(1.5125,-0.2){\framebox(0.675,0.20){\(\MIof{X;Y}\)}}
  \put(0,-0.2){\framebox(1.475,0.20){\(\Hof{X \given Y}\)}}
  \put(2.225,-0.2){\framebox(0.775,0.20){\(\Hof{Y \given X}\)}}
  \end{picture}
  \end{center}
  \caption{\emph{Reproduction of Figure 8.1 from \citet{mackay2003information} using the new suggested notation:} The relationship between joint entropy, 
   marginal entropy, conditional entropy and mutual information.
  \label{fig:mackay_81}
  }%
\end{figure}

\yarin{this can be used for the BG section in the thesis / journal paper

split this section into what's used to date, and what you're proposing; reading this, I assume that all the stuff that you discuss is already existing lit. Start from a proper bg sect, explain all the different notations used atm and the abuse of notation and source of confusion. Then have your own contribution section which explains what you're suggesting to do differently (which can be brief) followed by all the stuff we can deduce from the unified notation }
For a general introduction to information theory, we refer to \citet{CTJ, yeung2008information}.\andreas{cite Olah's blog post too} In the following section, we introduce our practical and unified notation. We start with notation that is explicit about the probability distribution $\pof{}$.
\begin{definition}
  Let Shannon's information content $\ICof{}$, cross-entropy $\CrossEntropy{}{}$, entropy $\xHof{}$, and KL divergence $\Kale{}{}$ (Kullback-Leibler divergence) be defined for a probability distribution $\opp$ and non-negative function $\opq$ for a random variable \(X\) and non-negative real number \(\rho\) as:
  \begin{align}
      \ICof{\rho} &\defeq -\ln{\rho} \\
      \CrossEntropy{\pof{X}}{\qof{X}} &\defeq \simpleE{\pof{x}}{\ICof{\qof{x}}}\\
      \xHof{\pof{X}} &\defeq \CrossEntropy{\pof{X}}{\pof{X}} \\
      \Kale{\pof{X}}{\qof{X}} &\defeq \CrossEntropy{\pof{X}}{\qof{X}} - \xHof{\pof{X}}. 
  \end{align}
\end{definition}
\citet{shannon1948mathematical} introduced\yarin{as well as bunch of other properties - the justification is actually really nice starting from desiderata for a quantity of information and then proving that this log def is the \emph{only} one that satisfies the desiderata. You can comment on this given that most people didn't read the original paper from 48} the information content as negative logarithm due to its additivity for independent messages: \(\ICof{\pof{x,y}} = \ICof{\pof{x}} + \ICof{\pof{x,y}}\) for independent random variables \(X\) and \(Y\).
\begin{proposition}
  \label{prop:xe_kale_rules}
  For a random variable \(X\) with probability distributions \(\opp\), \(\opp_1\) and \(\opp_2\), and non-negative functions \(\opq\), \(\opq_1\) and \(\opq_2\) and \(\alpha \in [0,1]\):
  \begin{align}
    &\CrossEntropy{\opp}{\alpha \opq} = \CrossEntropy{\opp}{\opq} + \ICof{\alpha}, \\
    &\CrossEntropy{\opp}{\opq^\alpha} = \alpha \CrossEntropy{\opp}{\opq} \\
    &\CrossEntropy{\opp}{\opq_1 \opq_2} = \CrossEntropy{\opp}{ \opq_1} + \CrossEntropy{\opp}{ \opq_2}, \\
    &\CrossEntropy{\alpha \opp_1 + (1 - \alpha) \opp_2}{\opq} = \\
    & \quad = \alpha \CrossEntropy{\opp_1}{\opq} + (1 - \alpha) \CrossEntropy{\opp_2}{\opq} \\
    & \quad = \CrossEntropy{\opp_1}{\opq^\alpha } + \CrossEntropy{\opp_2}{\opq^{1 - \alpha}},
  \end{align}
  where we have left out ``\((X)\)'' everywhere for brevity.
\end{proposition}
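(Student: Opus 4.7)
The plan is to reduce every identity to the defining formula $\CrossEntropy{\opp}{\opq} = \simpleE{\opp(x)}{\ICof{\opq(x)}}$ and then exploit two elementary facts: the logarithm turns products into sums (and exponents into scalar multiples), and expectations are linear both in the integrand and in the underlying measure. Each of the five identities thus reduces to a single pointwise algebraic step on $\ICof{\opq(x)} = -\ln \opq(x)$, after which the expectation is pulled back out.

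First I would dispatch the first three identities by pointwise manipulation. For the first, expanding $\CrossEntropy{\opp}{\alpha \opq}$ gives $\simpleE{\opp(x)}{\ICof{\alpha \opq(x)}} = \simpleE{\opp(x)}{\ICof{\alpha} + \ICof{\opq(x)}}$; pulling the constant $\ICof{\alpha}$ out of the expectation yields $\ICof{\alpha} + \CrossEntropy{\opp}{\opq}$. For the second, the pointwise identity $\ICof{\opq(x)^\alpha} = \alpha \ICof{\opq(x)}$ lets $\alpha$ come out of the expectation. For the third, $\ICof{\opq_1(x) \opq_2(x)} = \ICof{\opq_1(x)} + \ICof{\opq_2(x)}$ splits the expectation by linearity of the integrand.

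For the last identity, I would use linearity of the expectation in the measure: $\simpleE{\alpha \opp_1(x) + (1-\alpha) \opp_2(x)}{\ICof{\opq(x)}}$ equals $\alpha \simpleE{\opp_1(x)}{\ICof{\opq(x)}} + (1-\alpha) \simpleE{\opp_2(x)}{\ICof{\opq(x)}}$, which is the first equality. The second equality then follows by applying the second identity to each summand, absorbing $\alpha$ and $1-\alpha$ into the exponent on $\opq$.

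The main obstacle is really only bookkeeping. Since $\opq$ is only assumed non-negative rather than a probability distribution, I would note briefly that $\opq^\alpha$ and $\opq_1 \opq_2$ remain non-negative so that $\ICof{\cdot}$ is well-defined, and treat points where a factor vanishes via the standard convention $\ICof{0} = +\infty$ in the extended reals (both sides of each identity are then $+\infty$). Beyond that, every identity collapses to the definition plus logarithm laws plus linearity of expectation.
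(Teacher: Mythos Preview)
Your proposal is correct and follows essentially the same approach as the paper, which simply states that the identities follow from the linearity of the expectation and the additivity of the logarithm for products. Your version is more explicit and adds careful bookkeeping about non-negativity and the $\ICof{0}=+\infty$ convention, but the underlying argument is identical.
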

\begin{proof}
  The statements follows from the linearity of the expectation and the additivity of the logarithm for products.
\end{proof}
This can be extended to show that cross-entropies are linear in their left-hand argument and log-linear in their right-hand argument.

When we want to emphasize that we approximate the true distribution \(\opp\) using a different distribution \(\opq\) and the true probability distribution $\opp$ is understood, we use the notation $\varHof{\opq}{}$ for $\CrossEntropy{\pof{}}{\qof{}}$ following notation in \citet{kirsch2020unpacking, xu2020theory}:
\begin{definition}
When the true probability distribution $\opp$ is understood from context, we will use the following short-hands:
\begin{align}
    \Hof{X} &\defeq \xHof{\pof{X}} \\
    \varHof{\opq}{X} &\defeq \CrossEntropy{\pof{X}}{\qof{X}}.
\end{align}
When we have a parameterized distribution $\opq_\theta$ with parameters $\theta$, we will write $\varHof{\theta}{}$ instead of $\varHof{\opq_\theta}{}$ when the context is clear. 
\end{definition}
Approximating a possibly intractable distribution with a parameterized one is common when performing variational inference, that is.\andreas{forward cite the sections that do this?}
The main motivation for this notation is that when $\opq$ is the density of a distribution, that is $\int q(x) \, dx = 1$, we have $\varHof{\opq}{} \ge \Hof{}$ with equality when \(\opq = \opp\).
Thus, we have the following useful identities:
\begin{proposition}
  \yarin{give citations for all prev known results (even if it's just to a book)}We have the following lower-bounds\yarin{do you want perhaps to present these as "useful identities" that you use throughout? I'm confused about the context of why this is presented} for the cross-entropy and KL, with \(Z_q \defeq \int q(x) \, dx\):
  \begin{align}
    \CrossEntropy{\pof{X}}{\qof{X}} &\ge \xHof{\pof{X}} + \ICof{Z_q}, \\
    \Kale{\pof{X}}{\qof{X}} &\ge \ICof{Z_q},
  \end{align}
  with equality exactly when when \(\opq/Z_q = \opp\) for \(Z_q \defeq \int q(x) \, dx\).
\end{proposition}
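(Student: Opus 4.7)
The plan is to reduce both inequalities to the standard Gibbs inequality for proper probability distributions (i.e., $\CrossEntropy{\pof{X}}{\tilde{\opq}(X)} \ge \xHof{\pof{X}}$ whenever $\tilde{\opq}$ is a density summing to one, with equality iff $\tilde{\opq} = \opp$). The whole trick is to normalise $\opq$ into a proper density and then use the first identity from Proposition~\ref{prop:xe_kale_rules} to absorb the normalisation constant as an additive $\ICof{Z_q}$ term.

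Concretely, first I would define $\tilde{\opq} \defeq \opq / Z_q$, so that $\int \tilde{\opq}(x)\,dx = 1$ and hence $\tilde{\opq}$ is a probability density. Writing $\opq = Z_q \cdot \tilde{\opq}$ and applying $\CrossEntropy{\opp}{\alpha \opq} = \CrossEntropy{\opp}{\opq} + \ICof{\alpha}$ from Proposition~\ref{prop:xe_kale_rules} with $\alpha = Z_q$ gives
\begin{equation*}
\CrossEntropy{\pof{X}}{\qof{X}} = \CrossEntropy{\pof{X}}{\tilde{\opq}(X)} + \ICof{Z_q}.
\end{equation*}
Next, I would invoke Gibbs' inequality on the normalised term to conclude $\CrossEntropy{\pof{X}}{\tilde{\opq}(X)} \ge \xHof{\pof{X}}$ with equality iff $\tilde{\opq} = \opp$, i.e.\ $\opq/Z_q = \opp$. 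Adding $\ICof{Z_q}$ on both sides yields the first bound, and subtracting $\xHof{\pof{X}}$ from both sides of the first bound, together with the definition $\Kale{\pof{X}}{\qof{X}} = \CrossEntropy{\pof{X}}{\qof{X}} - \xHof{\pof{X}}$, yields the second bound with the same equality condition.

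The main obstacle is really just Gibbs' inequality itself, which is a classical consequence of Jensen's inequality applied to the convex function $\ICof{\cdot} = -\ln(\cdot)$: namely, $\simpleE{\pof{x}}{\ICof{\tilde{\opq}(x)/\pof{x}}} \ge \ICof{\simpleE{\pof{x}}{\tilde{\opq}(x)/\pof{x}}} = \ICof{1} = 0$, which rearranges to $\CrossEntropy{\pof{X}}{\tilde{\opq}(X)} \ge \xHof{\pof{X}}$ with equality exactly when $\tilde{\opq}/\opp$ is $\opp$-a.s.\ constant, forcing $\tilde{\opq} = \opp$ since both normalise to one. Since this is a standard textbook fact (cf.\ \citet{CTJ}), I would either cite it directly or sketch the one-line Jensen argument above, and otherwise the proof is a two-line application of the cross-entropy identity in Proposition~\ref{prop:xe_kale_rules}.
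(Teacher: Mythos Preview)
Your proposal is correct and takes essentially the same approach as the paper: both arguments rest on Jensen's inequality applied to the convex function $\ICof{\cdot}=-\ln(\cdot)$. The paper's one-line proof applies Jensen directly to $\Kale{\pof{X}}{\qof{X}}=\simpleE{\pof{x}}\ICof{\qof{x}/\pof{x}}\ge \ICof{\simpleE{\pof{x}}\qof{x}/\pof{x}}=\ICof{Z_q}$, whereas you first peel off the normalising constant via \Cref{prop:xe_kale_rules} and then invoke Gibbs' inequality on the normalised $\tilde{\opq}$---a cosmetic reorganisation of the same Jensen step, with the identical equality condition $\opq/Z_q=\opp$.
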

\begin{proof}
  The statements follow from Jensen's inequality and the convexity of \(\ICof{}\).
\end{proof}
This also implies the non-negativity of the KL for densities when we substitute $Z_q=1$ in above statements. We repeat the result as it is often used: 
\begin{corollary}
When $q$ is a probability distribution, we have:
  \begin{align}
    \CrossEntropy{\pof{X}}{\qof{X}} &\ge \xHof{\pof{X}}, \label{eq:cross_entropy_vs_entropy} \\
    \Kale{\pof{X}}{\qof{X}} &\ge 0,
  \end{align}  
  with equality exactly when $\opq = \opp$.\andreas{equality under distribution, e.g. for almost all}
\end{corollary}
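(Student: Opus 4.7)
The plan is to obtain this corollary as an immediate specialization of the preceding proposition by substituting $Z_q = 1$. Since the hypothesis is that $\opq$ is a probability distribution (density), it integrates to one, i.e.\ $Z_q \defeq \int q(x)\,dx = 1$. All the work has already been done in the proposition; this step is essentially bookkeeping.

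First I would evaluate the penalty term $\ICof{Z_q}$ at $Z_q = 1$. By definition $\ICof{\rho} = -\ln \rho$, so $\ICof{1} = 0$. Substituting into the two bounds of the proposition gives
\begin{align*}
  \CrossEntropy{\pof{X}}{\qof{X}} &\ge \xHof{\pof{X}} + 0 = \xHof{\pof{X}}, \\
  \Kale{\pof{X}}{\qof{X}} &\ge 0,
\end{align*}
which are the two inequalities claimed.

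For the equality condition, the proposition states that equality holds exactly when $\opq / Z_q = \opp$. Since $Z_q = 1$ here, this condition reduces to $\opq = \opp$, as claimed. (Strictly speaking, this is equality $\opp$-almost everywhere, matching the annotation in the corollary.)

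There is no real obstacle: the corollary is a one-line substitution. The only thing worth being careful about is making explicit that ``$\opq$ is a probability distribution'' means exactly $Z_q = 1$, so that the reduction from the proposition is unambiguous, and that the nonnegativity of $\Kale{\pof{X}}{\qof{X}}$ can equivalently be read off from the first inequality via the definition $\Kale{\pof{X}}{\qof{X}} = \CrossEntropy{\pof{X}}{\qof{X}} - \xHof{\pof{X}}$.
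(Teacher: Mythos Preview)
Your proposal is correct and matches the paper's own justification exactly: the paper states the corollary follows by substituting $Z_q = 1$ into the preceding proposition, which is precisely what you do, including the handling of $\ICof{1}=0$ and the reduction of the equality condition $\opq/Z_q = \opp$ to $\opq = \opp$.
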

Note that for continuous distributions, above equality \(\opp = \opq\) only has to hold almost everywhere.

Above definitions are trivially extended to joints of random variables by substituting the random variable of the product space. Similarly, the conditional entropy is defined by taking the expectation over both \(X\) and \(Y\).
For example:
\begin{proposition}
  Given random variables \(X\) and \(Y\), we have:
  \begin{align}
    \Hof{X, Y} &= \simpleE{\pof{x, y}}{\ICof{\pof{x, y}}}; \\
    \Hof{X \given Y} &= \simpleE{\pof{x, y}}{\ICof{\pof{x \given y}}}.
  \end{align}
\end{proposition}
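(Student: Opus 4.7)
The plan is to reduce both equalities to the definitions already in place for entropy of a single random variable and for the expectation of information content. For the first equality, I would introduce the auxiliary random variable $Z \defeq (X,Y)$ taking values in the product space, note that $\pof{z} = \pof{x,y}$, and then apply the already-stated chain $\Hof{Z} = \xHof{\pof{Z}} = \CrossEntropy{\pof{Z}}{\pof{Z}} = \simpleE{\pof{z}}{\ICof{\pof{z}}}$. Rewriting the dummy variable $z$ as $(x,y)$ yields the claim. So this part is essentially a relabelling: the definitions given earlier apply verbatim to the joint, provided we accept that random variables on a product space are themselves random variables.

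For the second equality, the plan is to start from the conventional conditional-entropy definition $\Hof{X \given Y} = \simpleE{\pof{y}}{\xHof{\pof{X \given y}}}$, unfold the inner entropy using the already-established formula $\xHof{\pof{X \given y}} = \simpleE{\pof{x \given y}}{\ICof{\pof{x \given y}}}$, and then collapse the iterated expectation into a single joint expectation by factoring $\pof{x,y} = \pof{y}\pof{x \given y}$. This is just the tower property, but stated in notational terms it is the key move that turns a ``mean of entropies'' into an ``expectation of information content.''

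The main obstacle, such as there is one, is clarifying which expression is being taken as the \emph{definition} of $\Hof{X \given Y}$ and which is the derived identity. The surrounding text (``the conditional entropy is defined by taking the expectation over both $X$ and $Y$'') suggests that the formula in the proposition is itself the definition. In that case the proposition is really an \emph{equivalence statement}: the compact joint-expectation form agrees with the more traditional iterated form $\simpleE{\pof{y}}{\xHof{\pof{X \given y}}}$. I would therefore present the proof as a one-line calculation, $\simpleE{\pof{x,y}}{\ICof{\pof{x \given y}}} = \simpleE{\pof{y}}{\simpleE{\pof{x \given y}}{\ICof{\pof{x \given y}}}} = \simpleE{\pof{y}}{\xHof{\pof{X \given y}}}$, and let the reader identify the last expression with whichever form they find most familiar. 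No nontrivial inequality or limit is involved; the content is purely notational bookkeeping on top of the linearity of expectation.
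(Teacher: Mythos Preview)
Your proposal is correct. In fact, the paper does not supply a proof for this proposition at all: it is introduced with the sentence ``Above definitions are trivially extended to joints of random variables by substituting the random variable of the product space. Similarly, the conditional entropy is defined by taking the expectation over both \(X\) and \(Y\). For example:'' and the proposition is then stated as an illustration of that extension. So the paper treats both equalities as definitional, exactly the ambiguity you flagged. Your write-up is more explicit than the paper's own treatment, and your observation that the second line can be read either as the definition or as an identity equivalent to the iterated form \(\simpleE{\pof{y}}{\xHof{\pof{X \given y}}}\) via the tower property is the right way to reconcile the two readings.
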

In particular, note that \(\Hof{X \given Y}\) is an expectation over \(X\) \emph{and} \(Y\).

For cross-entropies and KL divergences, we expand the definitions similarly. In particular, we have the following equality for cross-entropies, which follows from these definitions:
\begin{align}
  &\CrossEntropy{\pof{X \given Y}}{\qof{X \given Y}} = \simpleE{\pof{x,y}}{\ICof{\qof{x \given y}}} \notag \\
  & \quad = \CrossEntropy{\pof{X, Y}}{\qof{X \given Y}}.
\end{align}
The last idiosyncrasy only applies to cross-entropies. Indeed, for KL divergences, we have:
\begin{align}
  &\Kale{\pof{X \given Y}}{\qof{X \given Y}} = \\
  & \quad = \CrossEntropy{\pof{X, Y}}{\qof{X \given Y}} - \xHof{\pof{X \given Y}} \\
  &\Kale{\pof{X, Y}}{\qof{X \given Y}} = \\
  & \quad = \CrossEntropy{\pof{X, Y}}{\qof{X \given Y}} - \xHof{\pof{X, Y}}.
\end{align}
Note, that the second terms are usually not equal \(\xHof{\pof{X \given Y}} \not= \xHof{\pof{X, Y}}\), and the two terms are thus different.

The reader might wonder when we are interested in \(\Kale{\pof{X, Y}}{\qof{X \given Y}}\). It can arise when performing symbolic manipulations, so we mention it explicitly here.

\citet{mackay2003information} has an elegant visualizations for information quantites, which we reproduce in \Cref{fig:mackay_81}. \citet{yeung1991outlook} introduces I-diagrams which provide another useful intuitive approach, but they do not scale as easily to what we introduce next.

\yarin{Put this as its own section to distinguish from BG}
\textbf{Observed outcomes.} So far, we have introduced well-known information-theoretic quantities using a more consistent notation.\yarin{I'm confused if prev text is claimed to be a contribution, or just organising the lit into consistent notation that we can use for the next section which is the contribution. If prev text was a contributoin, I'd expect to see a BG sect with a discussion of existing notation and why it's not sufficient}
Now, we further canonically extend\yarin{discuss existing PMI in NLP first, what it's used for, why it's important, why people don't use it in ML. Similarly for mixed PMI from cognitive science 

then re-pitch this as "we bridge the above two into ML" rather than "we extend"} the definitions to tie random variables to specific observed outcomes, e.g.\ $X = x$. We refer to $X$ when we have $X=x$ in an expression as \emph{tied random variable} as it is \emph{tied} to an outcome. If we mix \emph{(untied) random variables} and \emph{tied random variables}, we define $\Hof{}$ as an operator which takes an expectation of Shannon's information content for the given expression over the (untied) random variables conditioned on the tied outcomes. For example, $\Hof{X, Y=y \given Z, W=w} = \simpleE{\pof{X, Z \given y, w}} \ICof{\pof{x, y \given z, w}}$ following this notation. We generally shorten \(Y=y\) to \(y\) when the connection is clear from context.
Similarly, we have $\CrossEntropy{\pof{X \given y}}{\qof{X \given y}} = \simpleE{\pof{x \given y}} \ICof{\qof{x \given y}}$.
Importantly, all of the above maintain the identities $\Hof{X, Y} = \simpleE{\pof{x}} \Hof{x, Y} = \simpleE{\pof{y}} \Hof{X, y}$, which is the motivation behind these extensions. \Cref{fig:mixed_entropy_diagram} provides an overview over the quantities for two random variables \(X\) and \(Y\) when \(Y=y\) is observed. We define everything in detail below and provide intuitions.

\begin{definition}
  Given random variables $X$ and $Y$ and outcome $y$, we define:
  \begin{align}
    \Hof{y} &\defeq \ICof{\pof{y}} \\
    \Hof{X, y} &\defeq \simpleE{\pof{x \given y}} \Hof{x, y} = \simpleE{\pof{x \given y}} \ICof{\pof{x, y}} \\
    \Hof{X \given y} &\defeq \simpleE{\pof{x \given y}} \Hof{x \given y} = \simpleE{\pof{x \given y}} \ICof{\pof{x \given y}} \\
    \Hof{y \given X} &\defeq \simpleE{\pof{x \given y}} \Hof{y \given x} = \simpleE{\pof{x \given y}} \ICof{\pof{y \given x}},
  \end{align}
  where we have shortened $Y=y$ to $y$.
\end{definition}
Note \(\Hof{y}, \Hof{X, y}\), and so on are shorthands for \(\xHof{\pof{y}}\), \(\xHof{\pof{X, y}}\), and so on. Shannon's information content could also be defined as a special case of \(\xHof{}\), as we have \(\ICof{\pof{x}} = \xHof{\pof{x}}\).

The intuition from information theory behind these definitions is that, e.g., $\Hof{X, y}$ measures the average length of transmitting $X$ and $Y$ together when $Y=y$ unbeknownst to the sender and receiver, and
\(\Hof{y \given X}\) measures how much additional information needs to be transferred on average for the receiver to learn \(y\) when it already knows $X \given y$.

As a memory hook for the reader, lower-case letters are always used for tied random variables and upper-case letters for (untied) random variables over which we take an expectation. This makes it easy to differentiate between the two cases and write down the actual expressions.

\begin{figure}
  \begin{center}
  \setlength{\unitlength}{1in}
  \begin{picture}(3,1.70)(0,-0.8)
  \put(0,0.7){\framebox(3,0.20){\(\Hof{X, y}\)}}
  \put(1.5,0.4){\framebox(1.5,0.20){\(\Hof{y}\)}}
  \put(0,0.1){\framebox(2.0,0.20){\(\Hof{X}\)}}
  \put(1.5125,-0.2){\framebox(0.475,0.20){\(\MIof{X ; y}\)}}
  \put(1.5125,-0.5){\framebox(0.675,0.20){\(\MIof{y ; X}\)}}
  \put(0,-0.35){\framebox(1.475,0.20){\(\Hof{X \given y}\)}}
  \put(2.225,-0.35){\framebox(0.775,0.20){\(\Hof{y\given X}\)}}
  \put(0,-0.8){\framebox(2.2,0.20){\(\simpleE{\pof{x \given y}}\Hof{x}\)}}
  \end{picture}
  
  \end{center}
  {%
  \caption{\emph{The relationship between between joint entropy \(\Hof{X, y}\), 
  entropies \(\Hof{X}, \Hof{y}\), conditional entropies \(\Hof{X \given y}, \Hof{y \given X}\), information gain \(\MIof{X ; y}\) and surprise \(\MIof{y ; X}\) when \(Y=y\) is observed.}
  We include \(\simpleE{\pof{x \given y}}\Hof{x}\) to visualize \Cref{proposition:observed_conditional_entropies}.
  The figure follows Figure 8.1 in \citet{mackay2003information}.}
  \label{fig:mixed_entropy_diagram}
  }%
\end{figure}

From above definition, we also have \(\Hof{x, y} = \ICof{\pof{x, y}}\) and \(\Hof{x \given y} = \ICof{\pof{x \given y}}\).
Beware, however, that while we have \(\Hof{X \given y} = \Hof{X, y} - \Hof{y}\), for \(\Hof{y \given X}\), there is no such equality for \(\Hof{y \given X}\):
\begin{proposition}
  \label{proposition:observed_conditional_entropies}
  Given random variables $X$ and $Y$ and outcome $y$, we generally have:
  \begin{align}
    \Hof{X \given y} &= \Hof{X, y} - \Hof{y} \\
    \Hof{y \given X} &= \Hof{X, y} - \simpleE{\pof{x \given y}} \Hof{x} \notag \\
    & \mathrel{\textcolor{red}{\not=}} \Hof{X, y} - \Hof{X},
  \end{align}
\end{proposition}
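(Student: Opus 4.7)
The plan is to unfold every term to its definition as an expectation of Shannon information content under $\pof{x \given y}$, and then manipulate products and quotients of probabilities using additivity of $\ICof{}$, namely $\ICof{\alpha\beta} = \ICof{\alpha} + \ICof{\beta}$, which follows from Proposition~2.2. The asymmetry between the two identities comes from the fact that, once $Y=y$ is tied, the outer expectation is always against $\pof{x \given y}$, regardless of whether we are measuring information about $X$ or about $y$.

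For the first identity I would write $\Hof{X, y} - \Hof{y} = \simpleE{\pof{x \given y}} \ICof{\pof{x, y}} - \ICof{\pof{y}}$, pull the constant $\ICof{\pof{y}}$ inside the expectation, and use the factorization $\pof{x, y} = \pof{x \given y}\pof{y}$ together with additivity of $\ICof{}$ to collapse the integrand to $\ICof{\pof{x \given y}}$. This is precisely $\Hof{X \given y}$ by definition.

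For the second identity I would apply the other factorization $\pof{x, y} = \pof{y \given x}\pof{x}$ inside $\Hof{X, y} = \simpleE{\pof{x \given y}}\ICof{\pof{x, y}}$, split the expectation by linearity into $\simpleE{\pof{x \given y}}\ICof{\pof{y \given x}} + \simpleE{\pof{x \given y}}\ICof{\pof{x}}$, and recognize the first summand as $\Hof{y \given X}$ and the second as $\simpleE{\pof{x \given y}}\Hof{x}$. Rearranging gives the claimed equality.

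The only subtle step is the inequality $\simpleE{\pof{x \given y}}\Hof{x} \neq \Hof{X}$ in general. Both are expectations of the \emph{same} integrand $\ICof{\pof{x}}$ but against \emph{different} measures, $\pof{X \given y}$ versus $\pof{X}$, so they agree whenever $X \independent Y$ and typically disagree otherwise. A one-line counterexample such as $Y = X$ (giving $\simpleE{\pof{x \given y}}\Hof{x} = \Hof{y}$) suffices to rule out a general equality. I do not expect any real obstacle here; the main thing to be careful about is resisting the temptation to symmetrize the roles of $X$ and $y$, since the outer expectation is fixed by the tying convention and is not taken over $Y$.
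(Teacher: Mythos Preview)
Your proposal is correct and follows the same approach as the paper: both identities are obtained by unfolding the definitions and factoring $\pof{x,y}$ as $\pof{x\given y}\pof{y}$ or $\pof{y\given x}\pof{x}$, and the inequality is established by a counterexample showing $\simpleE{\pof{x\given y}}\Hof{x}\neq\Hof{X}$. The only difference is the counterexample itself: the paper uses a concrete binary joint (uniform on three of the four cells) with an explicit numerical check, whereas your $Y=X$ example is more conceptual and arguably cleaner, reducing the claim to the observation that the information content $\Hof{y}$ of a single outcome is not generally equal to the entropy $\Hof{X}$.
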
 
\begin{proof}
  \(\Hof{X \given y} = \Hof{X, y} - \Hof{y}\) follows immediately from the definitions.
  \(\Hof{y \given X} \not=\Hof{X, y} - \Hof{X}\) follows because, generally, \(\simpleE{\pof{x \given y}} \Hof{x} \not= \Hof{X}\) when \(\pof{x \given y} \not= \pof{x}\). 
  E.g., for \(X\) and \(Y\) only taking binary values, $0$ or $1$, let\footnote{See also \url{https://colab.research.google.com/drive/1HvLXUMQYcxMGZ4S_a00xddGmfz0IHaR3}.} \(\pof{x, y}=\tfrac{1}{3} \mathbb{1}
_{\{x=0=y\}}\), then \(\simpleE{\pof{x \given y}} \Hof{x} = \log{\left(\frac{3}{2} \right)} \not= \log{\left(\frac{3 \sqrt[3]{2}}{2} \right)} = \Hof{X}\).
\end{proof}

The mutual information and point-wise mutual information \citep{fano1962transmission,church1990word}\yarin{PMI: give a proper explanation of what these works do in the BG section, then say that later (ie here) you'll tie their thing to your thing
} are defined as:
\begin{definition}
  For random variables $X$ and $Y$ and outcomes $x$ and $y$ respectively, the point-wise mutual information $\MIof{x; y}$ and the mutual information $\MIof{X; Y}$ are:
  \begin{align}
    & \MIof{x; y} \defeq \Hof{x} - \Hof{x \given y} = \ICof{\frac{\pof{x}\pof{y}}{\pof{x, y}}} \\
    & \MIof{X; Y} \defeq \Hof{X} - \Hof{X \given Y} = \simpleE{\pof{x, y} } \MIof{x; y}.
  \end{align}
\end{definition}
This is similarly extended to $\MIof{X ; Y \given Z} = \Hof{X \given Z} - \Hof{X \given Y, Z}$ or $\MIof{X_1, X_2 ; Y} = \Hof{X_1, X_2} - \Hof{X_1, X_2 \given Y}$ and so on.

There are two common, sensible quantities we can define when we want to consider the information overlap between an random variable and an outcome: the \emph{information gain}, also known as \emph{specific information} and the \emph{surprise} \citep{deweese1999measure, butts2003much}. These two quantities\yarin{give a proper explanation of what these works do in the BG section, then say that later (ie here) you'll tie their thing to your thing} are usually defined separately in the cognitive sciences and neuroscience \citep{williams2011information}; however, we can unify them after relaxing the symmetry of the mutual information as done above: %
\begin{definition}
  Given random variables $X$ and $Y$ and outcome $y$ for $Y$, we define\yarin{similarly to Shanon's work, instead of extending the def's because they look nice "let's plug this thing into the notation and see what we get", you can start from the desiderata of partial MI, and derive from that desiderata that your def is the only one that satisfies it} the \emph{information gain} $\MIof{X; y}$ and the \emph{surprise} $\MIof{y; X}$ as:
  \begin{align}
    & \MIof{X ; y} \defeq \Hof{X} - \Hof{X \given y} \\
    & \MIof{y ; X} \defeq \Hof{y} - \Hof{y \given X}.
  \end{align}
\end{definition}
This unifying definition is novel to the best of our knowledge. It works by breaking the symmetry that otherwise exists for the regular and point-wise mutual information.

Note that the surprise can also be expressed as \(\MIof{y ; X} = \Kale{\pof{X \given y}}{\pof{X}}\). For example, this is done in \citet{bellemare2016unifying}---even though the paper mistakenly calls this surprise an information gain when it is not. 

We enumerate a few equivalent ways of writing the mutual information and surprise---the information gain has no such equivalences. This can be helpful to spot these quantities in the wild. 
\begin{proposition}
  We have
  \begin{align}
    \MIof{X; Y} &= \Kale{\pof{X, Y}}{\pof{X} \pof{Y}} \\
    \MIof{y ; X} &= \simpleE{\pof{x \given y}} \MIof{y ; x} \\
                 &= \E{\pof{x \given y}}{\Hof{x}} - \Hof{X \given y} \\
                 &= \Kale{\pof{X \given y}}{\pof{X}}.
  \end{align}
\end{proposition}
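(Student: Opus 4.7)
The plan is to unfold each quantity via the definitions of $\opInformationContent$, $\xHof{}$, and $\opKale$ and then use the additivity of $\ICof{}$ to regroup the log-terms into the desired form. All identities come down to two simple tricks: (i) recognizing that quantities that do not depend on the expectation variable pass in and out of expectations freely, and (ii) that $\ICof{\pof{a}} - \ICof{\pof{b}} = \ICof{\pof{a}/\pof{b}}$.

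For the first identity, I start from the definition $\MIof{X;Y} = \Hof{X} - \Hof{X \given Y}$ and use the law of total expectation to write $\Hof{X} = \simpleE{\pof{x,y}}\ICof{\pof{x}}$ (so both terms are now expectations over the joint). Combining via additivity of $\ICof{}$, the integrand becomes $\ICof{\pof{x}\pof{y}/\pof{x,y}}$ after substituting $\pof{x \given y} = \pof{x,y}/\pof{y}$. On the other side, expanding $\Kale{\pof{X,Y}}{\pof{X}\pof{Y}} = \CrossEntropy{\pof{X,Y}}{\pof{X}\pof{Y}} - \xHof{\pof{X,Y}}$ and applying \Cref{prop:xe_kale_rules} to split the cross-entropy into $\CrossEntropy{\pof{X,Y}}{\pof{X}} + \CrossEntropy{\pof{X,Y}}{\pof{Y}}$ yields the same integrand, giving the identity.

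For the chain of identities involving $\MIof{y;X}$, I begin with $\MIof{y;X} = \Hof{y} - \Hof{y \given X}$. Since $\Hof{y} = \ICof{\pof{y}}$ does not depend on $x$, I rewrite it as $\simpleE{\pof{x \given y}}\ICof{\pof{y}}$; combined with $\Hof{y \given X} = \simpleE{\pof{x \given y}}\ICof{\pof{y \given x}}$ from the definition, this gives $\MIof{y;X} = \simpleE{\pof{x \given y}}\bigl[\ICof{\pof{y}} - \ICof{\pof{y \given x}}\bigr] = \simpleE{\pof{x \given y}}\MIof{y;x}$, which is the first line. For the second line I use the symmetry $\MIof{y;x} = \MIof{x;y} = \Hof{x} - \Hof{x \given y}$ (immediate from the closed form $\ICof{\pof{x}\pof{y}/\pof{x,y}}$ for pointwise MI) and split the expectation, noting that $\simpleE{\pof{x \given y}}\Hof{x \given y} = \Hof{X \given y}$ by definition.

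For the final KL form, I expand $\Kale{\pof{X \given y}}{\pof{X}} = \CrossEntropy{\pof{X \given y}}{\pof{X}} - \xHof{\pof{X \given y}} = \simpleE{\pof{x \given y}}\ICof{\pof{x}} - \Hof{X \given y} = \simpleE{\pof{x \given y}}\Hof{x} - \Hof{X \given y}$, which matches the preceding line. The only mild subtlety in the whole argument is keeping track of which distribution is averaging over what, specifically that $\simpleE{\pof{x \given y}}\Hof{x} \neq \Hof{X}$ in general (as already highlighted in \Cref{proposition:observed_conditional_entropies}); apart from this bookkeeping, the proof is routine.
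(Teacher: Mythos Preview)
Your proof is correct. The paper does not include an explicit proof for this proposition---it presents these identities as immediate consequences of the definitions---though the chain $\MIof{y;X} = \simpleE{\pof{x \given y}}\MIof{y;x} = \E{\pof{x \given y}}{\Hof{x} - \Hof{x \given y}} = \Kale{\pof{X \given y}}{\pof{X}}$ is written out verbatim inside the later proof of the inequalities proposition, so your unfolding-of-definitions argument is exactly the route the paper (implicitly) takes.
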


The information gain \(\MIof{X ; y}\) for $X$ given $y$ measures the reduction in uncertainty about $\Hof{X}$ when we observe $y$. \(\Hof{X}\) is the uncertainty about the true \(X\) that we want to learn as then entropy quantifies the amount of additional information that we need to transmit to fix \(X\), and similarly \(\Hof{X \given y}\) quantities the additional information we need to transmit to fix \(X\) once $y$ is known \citep{lindley1956measure}.
On the other hand, the surprise \(\MIof{y ; X}\) of $y$ for $X$ measures\yarin{I would add more info in the BG about the info theory view as introduced by Shanon if you use that as intuition} 
how much the posterior $X \given y$ lies in areas where $\pof{x}$ was small before observing $y$ \citep{deweese1999measure}. 

An important difference between the two is that the information gain can be \emph{chained} while the surprise cannot: 
\begin{proposition}
  Given random variables $X$, $Y_1$, and \(Y_2\) and outcomes $y_1$ and $y_2$ for $Y_1$ and \(Y_2\), respectively, we\yarin{you need to introduce your notation "y1,y2" first (def'd as "{..}")} have:
  \begin{align}
    \MIof{X ; y_1, y_2} & = \MIof{X ; y_1} + \MIof{X ; y_2 \given y_1} \\
    \MIof{y_1, y_2 ; X} & \mathrel{\textcolor{red}{\not=}} \MIof{y_1 ; X} + \MIof{y_2 ; X \given y_1}.
  \end{align}   
\end{proposition}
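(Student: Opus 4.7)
The plan is to handle the two claims separately: the equality (first line) is a short telescoping argument, and the inequality (second line) requires isolating the obstruction and then exhibiting a small counterexample.

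For the chain rule on the information gain, I would work directly from the definitions. Define the conditional information gain in the natural way, $\MIof{X ; y_2 \given y_1} \defeq \Hof{X \given y_1} - \Hof{X \given y_1, y_2}$, in line with the earlier definition of the mutual information and its conditional variant. Then
\begin{align*}
\MIof{X ; y_1} + \MIof{X ; y_2 \given y_1}
&= \bigl(\Hof{X} - \Hof{X \given y_1}\bigr) \\
&\quad + \bigl(\Hof{X \given y_1} - \Hof{X \given y_1, y_2}\bigr) \\
&= \Hof{X} - \Hof{X \given y_1, y_2} \\
&= \MIof{X ; y_1, y_2},
\end{align*}
which is just telescoping. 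This part is essentially one line and uses no structure beyond the definition.

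For the surprise, I would first expose \emph{why} chaining fails by expanding both sides. Using $\Hof{y_1, y_2} = \ICof{\pof{y_1, y_2}} = \ICof{\pof{y_1}} + \ICof{\pof{y_2 \given y_1}} = \Hof{y_1} + \Hof{y_2 \given y_1}$, and the definitions $\Hof{y_1 \given X} = \simpleE{\pof{x \given y_1}} \ICof{\pof{y_1 \given x}}$, $\Hof{y_2 \given X, y_1} = \simpleE{\pof{x \given y_1, y_2}} \ICof{\pof{y_2 \given x, y_1}}$, and $\Hof{y_1, y_2 \given X} = \simpleE{\pof{x \given y_1, y_2}} \bigl[\ICof{\pof{y_1 \given x}} + \ICof{\pof{y_2 \given x, y_1}}\bigr]$, a term-by-term comparison reduces the putative equality to
\[
\simpleE{\pof{x \given y_1, y_2}} \ICof{\pof{y_1 \given x}} \;=\; \simpleE{\pof{x \given y_1}} \ICof{\pof{y_1 \given x}}.
\]
The two expectations are over \emph{different} distributions (the left conditions on $y_1, y_2$; the right only on $y_1$), so there is no reason for them to agree. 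This is directly analogous to the asymmetry exposed in \Cref{proposition:observed_conditional_entropies}.

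The main obstacle is then to exhibit a concrete joint distribution where the two sides actually differ numerically. I would try the minimal setting of three binary variables $X, Y_1, Y_2$ and pick an asymmetric joint — e.g., a variant of the distribution used in \Cref{proposition:observed_conditional_entropies} extended to include $Y_2$ so that $\pof{x \given y_1, y_2} \ne \pof{x \given y_1}$ on the support of $\ICof{\pof{y_1 \given x}}$. A direct numerical evaluation (and, ideally, a symbolic one using that the non-trivial term is the single expectation identified above) then verifies $\MIof{y_1, y_2 ; X} \ne \MIof{y_1 ; X} + \MIof{y_2 ; X \given y_1}$, completing the proof.
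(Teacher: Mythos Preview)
Your approach is correct and essentially matches the paper's. The first claim is the same telescoping argument. For the second, the paper expands $\MIof{y_1, y_2 ; X}$ via the point-wise form $\simpleE{\pof{x \given y_1, y_2}}\MIof{y_1, y_2; x}$ and isolates the obstruction as $\simpleE{\pof{x \given y_1, y_2}}\MIof{y_1; x} \ne \simpleE{\pof{x \given y_1}}\MIof{y_1; x}$; since $\MIof{y_1; x} = \ICof{\pof{y_1}} - \ICof{\pof{y_1 \given x}}$ and the first term is constant in $x$, this is exactly the same obstruction you identified. The one thing missing from your proposal is the actual counterexample: the paper writes down an explicit binary joint (with $\pof{y_1}=\tfrac12$, $\pof{x,y_2 \given y_1=0}=\tfrac14$, $\pof{x \given y_2=0,y_1=1}=\tfrac12$, $\pof{x=0 \given y_2=1,y_1=1}=1$) and evaluates both sides symbolically at $y_1=y_2=1$, so to complete your proof you do need to carry out that step rather than leave it as a plan.
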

\begin{proof}
We have
  \begin{align*}
    &\MIof{X ; y_1, y_2} = \Hof{X} - \Hof{X \given y_1, y_2} \\
    & \quad = \Hof{X} - \Hof{X \given y_1} + \Hof{X \given y_1} - \Hof{X \given y_1, y_2} \\ 
    & \quad = \MIof{X ; y_1} + \MIof{X ; y_2 \given y_1},
  \end{align*}
while
  \begin{align*}
    \MIof{y_1, y_2 ; X} &= \simpleE{\pof{x \given y_1, y_2}} \MIof{y_1, y_2; x} \\
    & = \underbrace{\simpleE{\pof{x \given y_1, y_2}} \MIof{y_1; x}}_{\quad \displaystyle\mathrel{\color{red}\not=}\simpleE{\pof{x \given y_1}} \MIof{y_1; x} = \MIof{y_1 ; X}} \\
    & \quad + \underbrace{\simpleE{\pof{x \given y_1, y_2}} \MIof{y_2 ; x \given y_1}}_{\displaystyle=\MIof{y_2; X \given y_1}}.
  \end{align*}
That is, generally, \(\simpleE{\pof{x \given y_1, y_2}} \MIof{y_1; x} \not= \MIof{y_1 ; X}\).
To conclude the proof, we instantiate \(\pof{x \given y_1, y_2} \not=\pof{x \given y_1}\): 
for \(X\), \(Y_1\), and \(Y_2\) taking binary values \(0, 1\) only, let \(\pof{y_1}=\tfrac{1}{2}, \, \pof{x, y_2 \given y_1=0}=\tfrac{1}{4}, \,\pof{x \given y_2=0, y_1=1} = \tfrac{1}{2}, \,\pof{x=0 \given y_2=1, y_1=1} = 1\). 
Then \(\simpleE{\pof{x \given y_1, y_2}} \MIof{y_1; x} = \log{\left(\frac{2 \sqrt{3} \sqrt[4]{5}}{5} \right)} \not= \log{\left(\frac{6}{5} \right)} = \MIof{y_1 ; X}\) for \(y_1=1, y_2=1\) as the reader can easily verify\footnote{See also \url{https://colab.research.google.com/drive/1gn6oQohRMqXKEhyCogiVDcx1VZFkShaQ}.}.
\end{proof}
However, both quantities do chain in their (untied) random variables:
\begin{proposition}
  Given random variables $X_1$, $X_2$, $Y$, and outcome $y$ for $Y$:
  \begin{align}
    & \MIof{X_1, X_2 ; y} = \MIof{X_1 ; y} + \MIof{X_2 ; y \given X_1} \\
    & \MIof{y; X_1, X_2 } = \MIof{y ; X_1} + \MIof{y ; X_2 \given X_1}.
  \end{align}
\end{proposition}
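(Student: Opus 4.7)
My plan is to derive both identities by the standard telescoping (``add zero'') trick applied to the defining entropy differences, combined with a chain rule for joint (possibly mixed) entropy. In each case the key step is to insert a middle entropy term that splits the expression into two pieces that match the definition of the right-hand side. The main subtlety, given the asymmetry between tied and untied variables that the paper has just emphasised for surprise, is to be careful that we are only ever chaining \emph{over the untied variables} $X_1, X_2$, and not over the tied outcome $y$.

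For the first identity, I would start from the definition $\MIof{X_1, X_2 ; y} = \Hof{X_1, X_2} - \Hof{X_1, X_2 \given y}$ and apply the ordinary entropy chain rule to both terms, using the definitions of $\Hof{X_1, X_2}$ and $\Hof{X_1, X_2 \given y}$ as expectations of Shannon information content together with the product rule $\pof{x_1, x_2} = \pof{x_1}\pof{x_2 \given x_1}$ and $\pof{x_1, x_2 \given y} = \pof{x_1 \given y}\pof{x_2 \given x_1, y}$. This yields $\Hof{X_1, X_2} = \Hof{X_1} + \Hof{X_2 \given X_1}$ and $\Hof{X_1, X_2 \given y} = \Hof{X_1 \given y} + \Hof{X_2 \given X_1, y}$. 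Grouping the differences then gives $(\Hof{X_1} - \Hof{X_1 \given y}) + (\Hof{X_2 \given X_1} - \Hof{X_2 \given X_1, y}) = \MIof{X_1 ; y} + \MIof{X_2 ; y \given X_1}$ directly from the definitions.

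For the second identity I would expand $\MIof{y ; X_1, X_2} = \Hof{y} - \Hof{y \given X_1, X_2}$ and telescope through $\Hof{y \given X_1}$, writing the right-hand side as $(\Hof{y} - \Hof{y \given X_1}) + (\Hof{y \given X_1} - \Hof{y \given X_1, X_2})$. The first bracket is $\MIof{y ; X_1}$ by definition; the work is in identifying the second bracket with $\MIof{y ; X_2 \given X_1}$. I expect this to be the main obstacle, because (as the previous proposition showed) chaining over tied outcomes is false in general, so we must verify that the surprise chain rule really is an untied-variable statement. Unfolding, $\Hof{y \given X_1} = \simpleE{\pof{x_1 \given y}} \ICof{\pof{y \given x_1}}$ and $\Hof{y \given X_1, X_2} = \simpleE{\pof{x_1, x_2 \given y}} \ICof{\pof{y \given x_1, x_2}}$; using $\pof{x_1, x_2 \given y} = \pof{x_1 \given y}\pof{x_2 \given x_1, y}$ to factor the outer expectation and applying Bayes' rule inside the log, the difference becomes $\simpleE{\pof{x_1 \given y}}\bigl[\Hof{y \given x_1} - \simpleE{\pof{x_2 \given x_1, y}} \ICof{\pof{y \given x_1, x_2}}\bigr] = \simpleE{\pof{x_1 \given y}} \MIof{y ; X_2 \given x_1}$, which is precisely $\MIof{y ; X_2 \given X_1}$ by the natural extension of the conditional-surprise definition.
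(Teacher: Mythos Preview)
Your proof is correct and follows essentially the same route as the paper: entropy chain rule plus telescoping, just run in the opposite direction (you start from the left-hand side, the paper starts from the right). For the second identity you do more than the paper: it simply takes $\MIof{y ; X_2 \given X_1} = \Hof{y \given X_1} - \Hof{y \given X_1, X_2}$ as the definition of conditional surprise and telescopes in one line, whereas your unfolding to $\simpleE{\pof{x_1 \given y}} \MIof{y ; X_2 \given x_1}$ is a correct but unnecessary verification of that definition.
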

\begin{proof}
  We have
  \begin{align*}
    & \MIof{X_1 ; y} + \MIof{X_2 ; y \given X_1} = \\
    & \quad = \Hof{X_1} - \Hof{X_1 \given y} + \Hof{X_2 \given X_1} + \Hof{X_2 \given X_1, y} \\
    & \quad = \underbrace{\Hof{X_1} + \Hof{X_2 \given X_1}}_{=\Hof{X_1, X_2}} -
       (\underbrace{\Hof{X_1 \given y} + \Hof{X_2 \given X_1, y}}_{=\Hof{X_1, X_2 \given y}}) \\
    & \quad = \MIof{X_1, X_2 ; y}.
  \end{align*}
  Similarly, we have
  \begin{align*}
    & \MIof{y ; X_1} + \MIof{y ; X_2 \given X_1} = \\
    & \quad = \Hof{y} - \Hof{y \given X_1} + \Hof{y \given X_1} - \Hof{y \given X_1, X_2} \\
    & \quad = \Hof{y} - \Hof{y \given X_1, X_2} \\
    & \quad = \MIof{y ; X_1, X_2}.
  \end{align*}
\end{proof}

These extensions of the mutual information are canonical as they permute with taking expectations over tied variables to obtain the regular (untied) quantities:
\begin{proposition}
  For random variables $X$ and $Y$:
  \begin{align}
    \MIof{X ; Y} &= \simpleE{\pof{y}} \MIof{X ; y} = \simpleE{\pof{y}} \MIof{y ; X} \\
    &= \simpleE{\pof{x, y}} \MIof{x, y}.
  \end{align}
\end{proposition}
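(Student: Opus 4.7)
The proof proceeds by establishing the three claimed equalities in turn, each reducing to the definitions and to a change in order of expectation (tower property). The main idea is that expectations over tied outcomes collapse to the untied quantities via the identity $\simpleE{\pof{y}} \simpleE{\pof{x \given y}} = \simpleE{\pof{x, y}}$.

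For the first equality, I would expand
\begin{align*}
\simpleE{\pof{y}} \MIof{X ; y} = \simpleE{\pof{y}} \bigl[ \Hof{X} - \Hof{X \given y} \bigr] = \Hof{X} - \simpleE{\pof{y}} \Hof{X \given y},
\end{align*}
and then use the definition $\Hof{X \given y} = \simpleE{\pof{x \given y}} \ICof{\pof{x \given y}}$ from the earlier definition of mixed quantities to rewrite
\begin{align*}
\simpleE{\pof{y}} \Hof{X \given y} = \simpleE{\pof{x, y}} \ICof{\pof{x \given y}} = \Hof{X \given Y},
\end{align*}
which yields $\MIof{X ; Y}$ directly from its definition.

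For the second equality, the same pattern works on the surprise: $\simpleE{\pof{y}} \Hof{y} = \simpleE{\pof{y}} \ICof{\pof{y}} = \Hof{Y}$, and analogously
\begin{align*}
\simpleE{\pof{y}} \Hof{y \given X} = \simpleE{\pof{y}} \simpleE{\pof{x \given y}} \ICof{\pof{y \given x}} = \simpleE{\pof{x, y}} \ICof{\pof{y \given x}} = \Hof{Y \given X}.
\end{align*}
Combining these gives $\simpleE{\pof{y}} \MIof{y ; X} = \Hof{Y} - \Hof{Y \given X} = \MIof{Y ; X}$, which equals $\MIof{X ; Y}$ by the symmetry of the (untied) mutual information (readily checked from $\MIof{X; Y} = \Kale{\pof{X, Y}}{\pof{X} \pof{Y}}$, already noted earlier).

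For the third equality, the statement $\MIof{X ; Y} = \simpleE{\pof{x, y}} \MIof{x ; y}$ is essentially the definition given in the excerpt (assuming the typographical $\MIof{x, y}$ is read as $\MIof{x ; y}$), so nothing further is required. No step is genuinely hard; the only thing to be careful about is bookkeeping the asymmetric definitions of $\Hof{X \given y}$ versus $\Hof{y \given X}$, both being expectations against $\pof{x \given y}$ rather than $\pof{x}$, so that the tower property is applied to the correct joint.
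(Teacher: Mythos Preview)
Your proposal is correct and takes essentially the same approach as the paper, which simply states that the result ``follows immediately from substituting the definitions.'' You have spelled out explicitly what that substitution entails (expanding the tied-outcome definitions and collapsing nested expectations via the tower property), which is exactly the intended argument.
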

\begin{proof}
  Follows immediately from substituting the definitions.
\end{proof}
Likewise, when all random variables are tied to a specific outcome, the quantities behaves as expected:
\begin{proposition}
  For random variables $X$, \(Y\), \(Y_1\) and \(Y_2\):
  \begin{align}
    \MIof{X; Y} &= \MIof{Y; X}, \text{and} \\
    \MIof{x ; y} &= \MIof{y ; x}; \\
    \MIof{X; Y_1, Y_2} &= \MIof{X; Y_1} + \MIof{X; Y_1 \given Y_2}, \text{and} \\
    \MIof{x; y_1, y_2} &= \MIof{x; y_1} + \MIof{x; y_2 \given y_1}.
  \end{align}
\end{proposition}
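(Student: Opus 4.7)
The plan is to derive each of the four identities directly from definitions; nothing more delicate is required because, unlike the surprise, both the information gain $\MIof{X;y}$ and the mutual information $\MIof{X;Y}$ are genuinely symmetric in the untied case and literally symmetric in the fully tied case.

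First I would dispatch the two symmetry claims simultaneously. The product rule $\pof{x,y}=\pof{x}\pof{y \given x}=\pof{y}\pof{x \given y}$ together with additivity of $\ICof{}$ under products gives
\begin{align*}
\Hof{x,y} = \Hof{x}+\Hof{y \given x} = \Hof{y}+\Hof{x \given y},
\end{align*}
so $\Hof{x}-\Hof{x \given y}=\Hof{y}-\Hof{y \given x}$, i.e.\ $\MIof{x;y}=\MIof{y;x}$. Taking $\simpleE{\pof{x,y}}$ on both sides and invoking the preceding proposition (which says $\simpleE{\pof{x,y}}\MIof{x;y}=\MIof{X;Y}$, and likewise for $\MIof{Y;X}$) immediately gives $\MIof{X;Y}=\MIof{Y;X}$.

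For the chain rules I would use the same add-and-subtract trick that appeared in the proof of the earlier chaining proposition. Writing $\MIof{X;Y_1,Y_2}=\Hof{X}-\Hof{X \given Y_1,Y_2}$ and inserting $\pm\,\Hof{X \given Y_1}$ splits the right-hand side as $(\Hof{X}-\Hof{X \given Y_1})+(\Hof{X \given Y_1}-\Hof{X \given Y_1,Y_2})=\MIof{X;Y_1}+\MIof{X;Y_2 \given Y_1}$. The pointwise statement is the identical manipulation but with lowercase letters, using $\Hof{x \given y_1}=\ICof{\pof{x \given y_1}}$ in place of entropies and with no expectation taken at any stage.

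There is no real obstacle. The one thing worth being careful about is notational bookkeeping: one must remember that $\Hof{X \given y}$ hides an expectation $\simpleE{\pof{x \given y}}$ while $\Hof{x \given y}$ does not, so that the pointwise chain-rule derivation never accidentally introduces an expectation that would convert it into one of the known non-identities (such as the failure of the surprise chain rule from a few propositions earlier). Since $X$ stays untied throughout and both $Y_i$ are tied, the asymmetry between $\simpleE{\pof{x \given y_1,y_2}}$ and $\simpleE{\pof{x \given y_1}}$ that broke the surprise chain rule never arises here.
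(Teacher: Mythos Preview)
Your proposal is correct and essentially matches the paper's approach. The paper treats only the pointwise chain rule $\MIof{x;y_1,y_2}=\MIof{x;y_1}+\MIof{x;y_2 \given y_1}$ as non-trivial and verifies it by multiplying out the fractions inside $\ICof{\cdot}$; your add-and-subtract of $\Hof{x \given y_1}=\ICof{\pof{x \given y_1}}$ is the same computation written additively rather than multiplicatively. One minor remark: your closing sentence (``$X$ stays untied throughout and both $Y_i$ are tied'') describes the mixed information-gain identity $\MIof{X;y_1,y_2}$ from the earlier proposition, not the fully tied $\MIof{x;y_1,y_2}$ you are actually proving here; the caution is harmless but misplaced, since in the all-tied case there are no expectations at all to worry about.
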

\begin{proof}
  The only interesting equality is \(\MIof{x; y_1, y_2} = \MIof{x; y_1} + \MIof{x; y_2 \given y_1}\):
  \begin{align*}
    &\MIof{x; y_1} + \MIof{x; y_2 \given y_1} = \\
    &\quad = \ICof{\frac{\pof{x} \, \pof{y_1}}{\pof{x, y_1}} \, \frac{\pof{x, y_1} \, \pof{y_1, y_2} \, \pof{y_1}}{\pof{y_1} \, \pof{y_1} \pof{x, y_1, y_2}}} \\
    &\quad = \ICof{\frac{\pof{x} \, \pof{y_1, y_2}}{\pof{x, y_1, y_2}}} \\
    &\quad = \MIof{x; y_1, y_2}.
  \end{align*}
\end{proof}

We can extend this to triple mutual information terms by adopting the extension $\MIof{X; Y; Z} = \MIof{X ; Y} - \MIof{X ; Y \given Z }$ \citep{yeung2008information} for outcomes as well: $\MIof{X ; Y; z} = \MIof{X ; Y} - \MIof{X ; Y \given z}$, which also works for higher-order terms.\andreas{appendix again...}

Overall, for the reader, there will be little surprise when working with the fully point-wise information-theoretic quantities, that is, when all random variables are observed. But the mixed ones require more care. We refer the reader back to \Cref{fig:mixed_entropy_diagram} to recall the relationships which also provide intuitions for the inequalities we will examine next.

\textbf{Inequalities.} We review some well-known inequalities first:
\begin{proposition}
  For random variables \(X\) and \(Y\),\yarin{move to bg} we have:
  \begin{align}
    \MIof{X ; Y} &\ge 0 \\
    \Hof{X} &\ge \Hof{X \given Y}, \\
  \intertext{and if \(X\) is a discrete random variables, we also have:}
    \Hof{X} &\ge 0 \\
    \MIof{X ; Y} &\le \Hof{X}.
  \end{align}
\end{proposition}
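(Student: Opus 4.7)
The plan is to derive all four inequalities from two ingredients that are already established in the excerpt: the identity $\MIof{X;Y} = \Kale{\pof{X,Y}}{\pof{X}\pof{Y}}$ proved just above, and the non-negativity of the KL divergence for probability distributions given in the earlier corollary (which itself followed from Jensen's inequality applied to the convex function $\ICof{\cdot}$).

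First, I would handle $\MIof{X;Y} \ge 0$ directly: since $\pof{x}\pof{y}$ is a bona fide joint probability distribution (integrates to $1$ as the product of marginals), the corollary yields $\Kale{\pof{X,Y}}{\pof{X}\pof{Y}} \ge 0$, with equality exactly when $\pof{X,Y} = \pof{X}\pof{Y}$, i.e.\ when $X$ and $Y$ are independent. The second inequality $\Hof{X} \ge \Hof{X \given Y}$ then follows immediately by rewriting $\MIof{X;Y} = \Hof{X} - \Hof{X \given Y}$ and rearranging.

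Next, for the discreteness claims, I would observe that if $X$ is discrete then $\pof{x} \in [0,1]$ for every outcome, so $\ICof{\pof{x}} = -\ln \pof{x} \ge 0$ pointwise. Taking the expectation preserves the inequality, giving $\Hof{X} = \simpleE{\pof{x}} \ICof{\pof{x}} \ge 0$. Applying the same pointwise argument to $\pof{x \given y}$ (which is also a discrete probability when $X$ is discrete) gives $\Hof{X \given y} \ge 0$, and then $\Hof{X \given Y} = \simpleE{\pof{y}} \Hof{X \given y} \ge 0$. Combining this with $\MIof{X;Y} = \Hof{X} - \Hof{X \given Y}$ yields $\MIof{X;Y} \le \Hof{X}$.

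There is no real obstacle here; the only subtlety worth flagging in the write-up is that the discreteness hypothesis is essential for the last two inequalities but not the first two. For continuous densities $\pof{x}$ can exceed $1$, making $\ICof{\pof{x}}$ and hence $\Hof{X}$ possibly negative, yet the KL-based argument for $\MIof{X;Y} \ge 0$ still goes through unchanged. I would state this caveat explicitly so that the reader does not mistakenly apply $\Hof{X} \ge 0$ or $\MIof{X;Y} \le \Hof{X}$ in the continuous setting.
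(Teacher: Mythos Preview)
Your proposal is correct and follows essentially the same route as the paper: the paper also derives the first two inequalities from $\MIof{X;Y} = \Kale{\pof{X,Y}}{\pof{X}\pof{Y}} \ge 0$ and obtains $\Hof{X} \ge 0$ from $\pof{x} \le 1$ via monotonicity of the expectation. If anything, you are slightly more thorough, since the paper's proof does not spell out the fourth inequality $\MIof{X;Y} \le \Hof{X}$ at all, whereas you explicitly close it by noting $\Hof{X \given Y} \ge 0$ in the discrete case.
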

\begin{proof}
  The first two statements follow from:
  \begin{align}
    \Hof{X} - \Hof{X \given Y} &= \MIof{X ; Y} \notag \\
    &= \Kale{\pof{X, Y}}{\pof{X}\pof{Y}} \notag \\
    &\ge 0.
  \end{align}
  The third statement follows from the monotony of the expectation and \(\pof{x} \le 1\) for all \(x\).
\end{proof}
Following \citet{kirsch2020unpacking}, if we assume that we add independent\yarin{".."} \emph{zero-entropy noise} \(\epsilon_0 \sim \normaldist{0, \frac{1}{2\pi e}}\) to continuous random variables as observation noise, we can also force their continuous entropy to be non-negative: we have \(\Hof{X + \epsilon_0} \ge 0\) and also \(\MIof{X + \epsilon_0; Y} \le \Hof{X+\epsilon_0}\) as \(\MIof{X + \epsilon_0; Y} = \Hof{X + \epsilon_0} - \Hof{X + \epsilon_0 \given Y}\) and \(\Hof{X + \epsilon_0 \given Y} \ge 0\), too. We say, we \emph{inject zero-entropy noise} when we assume that zero-entropy noise has already been added to a continuous random variable.
\begin{corollary}
  For continuous random variables \(X\) and \(Y\) where we inject zero-entropy noise into $X$, we have:
  \begin{align}
    \Hof{X} &\ge 0 \label{eq:zero-entropy-noise} \\
    \MIof{X ; Y} &\le \Hof{X}.
  \end{align}
\end{corollary}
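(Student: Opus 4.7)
The plan is to reduce the corollary to the corresponding discrete-case proposition above by showing that the injected zero-entropy noise forces the relevant continuous (differential) entropies to be non-negative. First I would verify that $\epsilon_0 \sim \normaldist{0, \tfrac{1}{2\pi e}}$ indeed has differential entropy equal to zero, using the standard formula $\xHof{\normaldist{0, \sigma^2}} = \tfrac{1}{2} \ln(2\pi e \sigma^2)$, which evaluates to $0$ for $\sigma^2 = \tfrac{1}{2\pi e}$. This justifies the name.

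Next, writing the noise-injected variable as $X = X' + \epsilon_0$ with $\epsilon_0$ independent of $X'$ (and of $Y$), I would prove \Cref{eq:zero-entropy-noise} by combining two facts: conditioning reduces (differential) entropy, and $\Hof{X' + \epsilon_0 \given X'} = \Hof{\epsilon_0 \given X'} = \Hof{\epsilon_0} = 0$ by translation invariance and independence. These together give
\begin{align*}
\Hof{X} = \Hof{X' + \epsilon_0} \ge \Hof{X' + \epsilon_0 \given X'} = 0.
\end{align*}

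For the second inequality, the same argument applied with an extra conditioning on $Y$ (using that $\epsilon_0$ is independent of $(X', Y)$) yields $\Hof{X \given Y} \ge \Hof{X \given Y, X'} = \Hof{\epsilon_0} = 0$. Plugging into the identity $\MIof{X; Y} = \Hof{X} - \Hof{X \given Y}$ stated in the previous proposition gives $\MIof{X; Y} \le \Hof{X}$, as required.

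The main obstacle is not analytic but rather making the convention precise: I would need to state explicitly at the start of the proof that "injecting zero-entropy noise" means augmenting the probability space by an independent $\epsilon_0 \sim \normaldist{0, \tfrac{1}{2\pi e}}$ which is independent of every other random variable in the expression under consideration. Once that convention is fixed, the conditioning arguments go through without any additional machinery, and both bounds follow in a line each.
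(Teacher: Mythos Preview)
Your proposal is correct and follows essentially the same approach as the paper: the paper's justification (given in the paragraph immediately preceding the corollary rather than in a separate proof environment) also argues that \(\Hof{X+\epsilon_0}\ge 0\) and \(\Hof{X+\epsilon_0\given Y}\ge 0\), and then reads off \(\MIof{X;Y}=\Hof{X}-\Hof{X\given Y}\le\Hof{X}\). The only difference is one of detail: the paper simply asserts the non-negativity (deferring to \citet{kirsch2020unpacking}), whereas you supply an explicit conditioning argument via \(\Hof{X'+\epsilon_0}\ge\Hof{X'+\epsilon_0\given X'}=\Hof{\epsilon_0}=0\), which is a welcome clarification.
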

For mixed outcomes we find similar inequalities:\andreas{mention independence}
\begin{proposition}
  For random variables \(X\) and \(Y\) with outcome \(y\), we have:
  \begin{align}
    \MIof{y ; X} &\ge 0 \\
    \Hof{y} &\ge \Hof{y \given X} \\    
    \simpleE{\pof{x \given y}{\Hof{x}}} &\ge \Hof{X \given y}, \\
    \intertext{and if \(Y\) is a discrete random variable (or we inject zero-entropy noise), we also have:}
      \Hof{y \given X}, \Hof{y} &\ge 0 \\
      \MIof{y ; X} &\le \Hof{y},
    \intertext{and if \(X\) is also a discrete random variable (or we inject zero-entropy noise), we gain:}
      \MIof{y ; X} &\le \simpleE{\pof{x \given y}}{\Hof{x}}.
  \end{align}
\end{proposition}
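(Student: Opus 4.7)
The plan is to handle each inequality by reducing it to either a KL divergence non-negativity argument or an observation that Shannon information content of a probability is non-negative when the associated random variable is discrete (or has had zero-entropy noise injected, by \Cref{eq:zero-entropy-noise}).

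First I would dispatch the three unconditional inequalities. For $\MIof{y;X} \ge 0$, I would use the rewriting $\MIof{y ; X} = \Kale{\pof{X \given y}}{\pof{X}}$ established in the earlier proposition; non-negativity of KL (for $\opp$ being a distribution, with $Z_q = 1$) gives the claim. The inequality $\Hof{y} \ge \Hof{y \given X}$ is then immediate by subtracting $\Hof{y \given X}$ from the definition $\MIof{y;X} = \Hof{y} - \Hof{y \given X}$. For $\simpleE{\pof{x \given y}}\Hof{x} \ge \Hof{X \given y}$, I would recall the identity $\MIof{y;X} = \E{\pof{x \given y}}{\Hof{x}} - \Hof{X \given y}$ from the earlier proposition and invoke $\MIof{y;X} \ge 0$ again; alternatively the same inequality can be read off directly as $\Kale{\pof{X \given y}}{\pof{X}} \ge 0$ after expanding both sides, which is the cleaner derivation.

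For the discrete-$Y$ block, the bounds $\Hof{y}, \Hof{y \given X} \ge 0$ follow from $\pof{y}, \pof{y \given x} \le 1$, so that $\ICof{\pof{y}} = -\ln \pof{y} \ge 0$ and $\Hof{y \given X} = \simpleE{\pof{x \given y}} \ICof{\pof{y \given x}} \ge 0$ as an expectation of non-negatives; the zero-entropy-noise variant is handled identically using \Cref{eq:zero-entropy-noise}. The upper bound $\MIof{y;X} \le \Hof{y}$ then follows from $\MIof{y;X} = \Hof{y} - \Hof{y \given X}$ together with $\Hof{y \given X} \ge 0$. Finally, for the last inequality assuming $X$ is also discrete (or has zero-entropy noise injected), I would again invoke $\MIof{y;X} = \simpleE{\pof{x \given y}}\Hof{x} - \Hof{X \given y}$ and bound $\Hof{X \given y} \ge 0$ as before.

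There is no real obstacle here: every step is a one-line consequence of either non-negativity of KL divergence or non-negativity of Shannon information content in the discrete/zero-entropy-noise setting, combined with identities already proved in the excerpt. The only subtlety worth flagging is that the asymmetric identity $\MIof{y;X} = \simpleE{\pof{x \given y}}\Hof{x} - \Hof{X \given y}$ (rather than the naive $\Hof{X} - \Hof{X \given y}$) is what gets used in the last inequality, matching the visualization in \Cref{fig:mixed_entropy_diagram} and \Cref{proposition:observed_conditional_entropies}.
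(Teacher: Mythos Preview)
Your proposal is correct and matches the paper's proof essentially step for step: both reduce the first three inequalities to $\MIof{y;X} = \Kale{\pof{X \given y}}{\pof{X}} \ge 0$ (together with the rewriting $\MIof{y;X} = \simpleE{\pof{x \given y}}\Hof{x} - \Hof{X \given y}$), and both handle the discrete/zero-entropy-noise cases via $\pof{y \given x} \le 1 \Rightarrow \Hof{y \given X} \ge 0$ and $\Hof{X \given y} \ge 0$. The only cosmetic difference is that the paper re-derives the chain of equalities in a single display rather than citing the earlier proposition.
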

\begin{proof}
  Again, the first two statements follow from:
  \begin{align}
    \Hof{y} - \Hof{y \given X} &= \MIof{y ; X} \notag \\
    &= \simpleE{\pof{x \given y}}{\MIof{y ; x}} \notag \\
    &= \E{\pof{x \given y}}{\Hof{x} - \Hof{x \given y}} \label{eq:proof_inequality_Hy}\\
    &= \Kale{\pof{X \given y}}{\pof{X}} \notag \\
    &\ge 0.
  \end{align}
  The third statement follows from \cref{eq:proof_inequality_Hy} above as \(0 \le \E{\pof{x \given y}}{\Hof{x} - \Hof{x \given y}} = \simpleE{\pof{x \given y}}{\Hof{x} - \Hof{X \given y}}\).
  The fourth statement follows from \(\pof{y \given x} \le 1\) when \(Y\) is a discrete random variable, and thus \(\Hof{y \given X} \ge 0\) due to the monotony of the expectation. When we inject zero-entropy noise, we similarly have \(\pof{y \given x} \le 1\) for almost all $y$ as otherwise $\Hof{X} \le 0$ in contradiction to \cref{eq:zero-entropy-noise}.
  The fifth statement follows from the fourth statement and \(\MIof{y ; X} = \Hof{y} - \Hof{y \given X} \le \Hof{y}\).
  Finally, if \(X\) is a discrete random variable as well, we also have \(\Hof{X \given y} \ge 0\), and thus
  \begin{align*}
    \MIof{y ; X} = \E{\pof{x \given y}}{\Hof{x} - \Hof{X \given y}} \le \simpleE{\pof{x \given y}}{\Hof{x}}.
  \end{align*}
  Similarly, when we inject zero-entropy noise into $X$, we also \(\Hof{X \given y} \ge 0\) following \cref{eq:zero-entropy-noise} as the noise is assumed to be independent.
\end{proof}
Note that there are no such general bound for \(\MIof{X ; y}\), \(\Hof{X \given y}\) and \(\Hof{y \given X}\).

\begin{corollary}
  We have $\MIof{y ; X} = 0$ exactly when $\pof{x \given y} = \pof{x}$ for all $x$ for given $y$. 
\end{corollary}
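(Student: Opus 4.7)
The plan is to reduce the claim to the equality case of non-negativity of the KL divergence, which was already established earlier in the corollary following Proposition on cross-entropy and KL lower bounds (specifically, $\Kale{\pof{X}}{\qof{X}} \ge 0$ with equality exactly when $\opp = \opq$). The key observation is that the surprise admits the representation $\MIof{y; X} = \Kale{\pof{X \given y}}{\pof{X}}$, which was stated in the earlier proposition enumerating equivalent forms. This turns the iff statement into an equality case of a standard inequality.

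First, I would invoke the identity $\MIof{y; X} = \Kale{\pof{X \given y}}{\pof{X}}$ directly from the earlier proposition, so that the entire claim becomes equivalent to: $\Kale{\pof{X \given y}}{\pof{X}} = 0$ iff $\pof{x \given y} = \pof{x}$ for all $x$. For the ``if'' direction, if $\pof{x \given y} = \pof{x}$ for all $x$, then the two distributions coincide and the KL divergence vanishes by its definition as a cross-entropy minus an entropy of the same distribution. For the ``only if'' direction, I would apply the corollary stating that $\Kale{\pof{X}}{\qof{X}} \ge 0$ with equality exactly when $\opq = \opp$ (almost everywhere in the continuous case), applied to $\opp = \pof{X \given y}$ and $\opq = \pof{X}$.

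The only subtlety, and the part worth stating carefully, is the continuous-versus-discrete distinction: in the continuous case the equality only needs to hold almost everywhere in $x$ with respect to $\pof{x \given y}$, matching the earlier footnote about equality ``under distribution.'' I would mention this briefly after the main argument so that the corollary is stated tightly. No new calculations are required; the proof is essentially a one-line rewriting followed by a citation of the earlier equality case.

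The hardest aspect is really just being careful about what ``for all $x$'' means in the continuous setting — there is no computational obstacle, and the result is immediate once the KL reformulation is invoked.
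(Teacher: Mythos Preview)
Your proposal is correct and follows essentially the same route as the paper: rewrite the surprise as \(\Kale{\pof{X \given y}}{\pof{X}}\) and invoke the equality case of KL non-negativity. The paper's own proof is the one-line version of exactly this argument; your added care about the almost-everywhere caveat in the continuous case is a reasonable elaboration but not a different idea.
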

\begin{proof}
  This follows from $0 = \MIof{y ; X} = \Kale{\pof{x \given y}}{\pof{x}}$ exactly when $\pof{x \given y} = \pof{x}$.
\end{proof}

In particular, there is a misleading intuition that the information gain \(\MIof{X ; y} = \Hof{X} - \Hof{X \given y}\) ought to be non-negative for any $y$. This is not true. This intuition may exist because in many cases when we look at posterior distributions, we only model the mean and assume a fixed variance of these distributions. The uncertainty around the mean does indeed reduce with additional observations; however, the uncertainty around the variance might not.\yarin{maybe give some visualisations for intuition?} The reader is invited to experiment with a normal distribution with known mean and compute the information gain on the variance depending on new observations. 

In a sense, the information-theoretic surprise is much better behaved than the information gain because we can bound it in various ways, which does not seem possible for the information gain. The information gain is a more useful quantity though for active learning and Bayesian optimal experimental design. As such it is useful to have a unified notation that includes both quantities.

\section{Example Application: Stirling's Approximation for Binomial Coefficients}
\label{sec:stirling_binomial_coefficient}
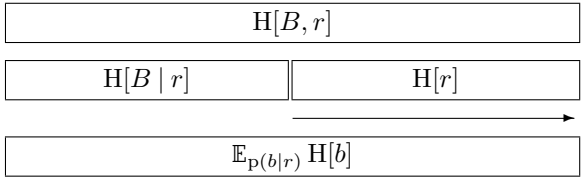
\begin{figure}
  \begin{center}
  \setlength{\unitlength}{1in}
  \begin{picture}(3,0.9)(0,0.0)
  \put(0,0.7){\framebox(3,0.20){\(\Hof{B, r}\)}}
  \put(1.5,0.4){\framebox(1.5,0.20){\(\Hof{r}\)}}
  \put(0,0.4){\framebox(1.475,0.20){\(\Hof{B \given r}\)}}
  \put(1.5,0.3){\vector(1,0){1.475}}
  \put(0,0.0){\framebox(3,0.20){\(\simpleE{\pof{b \given r}}{\Hof{b}}\)}}
  \end{picture}
  
  \end{center}
  {%
  \caption{\emph{The relationship between between the information quantities used in \S\ref{sec:stirling_binomial_coefficient}.} \(B\) is the joint of the binomial random variables, \(R\) is the number of successes in \(B\) with observed outcome \(r\). The arrow below \(\Hof{r}\) symbolizes that we minimize \(\Hof{r}\) by optimizing the success probability \(\rho\) to close the gap between \(\simpleE{\pof{b \given r}}{\Hof{b}}\) and \(\Hof{B \given r}\).
  }
  \label{fig:stirling_approx_for_binomial_coeffs}
  }%
\end{figure}

In \citet{mackay2003information} on page 2, the following simple approximation for a binomial coefficient is introduced:
\begin{equation}
    \log \binom{N}{r} \simeq(N-r) \log \frac{N}{N-r}+r \log \frac{N}{r}. \label{eq:stirling_approx}
\end{equation}
We will derive this result using the proposed extension to observed outcomes as it allows for an intuitive deduction. Moreover, we will see that this allows us to use other tools from probability theory to estimate the approximation error.

\textbf{Setup.} %
Let $B_1, \ldots, B_N$ be $N$ Bernoulli random variables with success probability $p$, and let $B$ be the joint of these random variables.

Further, let $R$ be the random variable that counts the number of successes in $B$. $R$ follows a Binomial distribution with success probability $\rho$ and $N$ trials.

\textbf{Main Idea.} %
For a given outcome $r$ of $R$, we have:
\begin{equation}
    \Hof{B, r} = \Hof{B \given r} + \Hof{r} \ge \Hof{B \given r},
\end{equation}
as $\Hof{\cdot}$ is non-negative for discrete random variables.
We will examine this inequality to obtain the approximation in \cref{eq:stirling_approx}.

Note that $\Hof{B \given r}$ is the additional number of bits needed to encode $B$ when the number of successes is already known. Similarly, $\Hof{B, r}$ is the number of bits needed to encode both $B$ and $R$ under the circumstance that $R=r$. 

\textbf{Determining $\Hof{B, r}$.} %
$R$ is fully determined by $B$, and thus we have $\Hof{B, R}=\Hof{B}$ and hence\footnote{This also follows immediately from $\Hof{R \given B} = 0 \implies \forall r: \Hof{r \given B} = 0$.}:
\begin{equation}
    \Hof{B, r} = \simpleE{\pof{b \given r}}{\Hof{b}}.
\end{equation}
$\simpleE{\pof{b \given r}}{\Hof{b}}$ is the expected number of bits needed to transmit the outcome $b$ of $B$ when $r$ is given.
When we encode $B$, we do not know $r$ upfront, so we need to transmit $N$ Bernoulli outcomes.
Hence, we need to transmit $r$ successes and $N-r$ failures. Given the success probability $\rho$, the optimal message length for this is:
\begin{align}
    &\simpleE{\pof{b \given r}}{\Hof{b}} = r \, \ICof{\rho} + (N-r)\, \ICof{1-\rho} \\
    & \quad = - r \log \rho - (N - r) \log (1-\rho).
\end{align}
All this is visualized in \Cref{fig:stirling_approx_for_binomial_coeffs}.

\textbf{Alternative Argument.}
We can also look at the terms $\Hof{B \given r} + \Hof{r}$ separately. We have
\begin{equation}
    \Hof{r} = -\log \pof{r} = - \log \left ( \binom{N}{r} \, \rho^r \,(1-\rho)^{N-r}\right ),
\end{equation}
and
\begin{equation}
    \Hof{B \given r} = -\simpleE{\pof{b \given r}} \log \pof{b \given r} = 
    \log \binom{N}{r}.
\end{equation}
The former follows from $R$ being binomially distributed.
For the latter, we observe that we need to encode $B$ while knowing $r$ already. Given $r$, $\pof{b \given r} = \text{const}$ for all valid $b$. There are $\binom{N}{r}$ possible $b$ for fixed $r$.
Hence, we can simply create a table with all possible configurations with $r$ successes. There are $\binom{N}{r}$ many. We then encode the index into this table. 

Each configuration with $r$ successes has an equal probability of happening, so we have a uniform discrete distribution with entropy $\log \binom{N}{r}$ and obtain the same result.

\textbf{Determining $\rho$.}
We already have
\begin{align}
\Hof{B \given r} + \Hof{r} &=  -r \log \rho - (N - r) \log (1-\rho) \notag \\
&\ge
\log \binom{N}{r} = \Hof{B \given r}. \label{eq:main_result}
\end{align}
How do we make this inequality as tight as possible?

We need to minimize the gap $\Hof{r}$ which creates the inequality in the first place, and $\Hof{r}=-\log \pof{r}$ is minimized exactly when $\pof{r}$ becomes maximal.

Hence, we choose the success probability $\rho$ to do so: the maximum likelihood solution $\argmax_p \pof{r \given \rho}$ is $\rho = \frac{r}{N}$.
The Binomial distribution of $R$ then has its mode, mean, and median at $r$.

Altogether, after substituting $\rho = \frac{r}{N}$ and rearranging, we see that the wanted approximation is actually an inequality:
\begin{align}
    \log \binom{N}{r} &\le -r \log \rho - (N - r) \log (1-\rho) \\
& = r \log \frac{N}{r} + (N - r) \log \frac{N}{N-r}. 
\end{align}

\textbf{Approximation Error $\Hof{r}$.} %
The approximation error is just $\Hof{r}$ as we can read off from \cref{eq:main_result}. We can easily upper-bound it with $\Hof{r} \le \log N$: 
First, $\Hof{R} \le \log N$ as the uniform distribution with entropy $\log N$ is the maximum entropy distribution in this case (discrete random variable with finite support).
Second, $\Hof{R}$ is the expectation over different $\Hof{R=r'}$. We have chosen $\rho=\tfrac{r}{N}$ such that $r$ is the mean of binomial distribution and has maximal probability mass. This means it has minimal information content. Hence $\Hof{r} \le \log N$ by contraposition as otherwise $\log N < \Hof{r} \le \Hof{R}$.

\andreas{can I write that out and see how well it approximates the binomial coefficient itself?}

\section{Example Application: ELBO of a Variational Auto-Encoder}

The specific evidence lower bound inequality (ELBO) developed in \citet{kingma2014autoencoding} is a useful tool. The derivation in the paper has been described as hard to follow, however. We can elegantly derive the\yarin{using this derivation we gain new insight about [BLAH] which we describe at the end of the chapter

[maybe IB?]} relevant inequality at a high level using our practical notation and Bayes' theorem.

\textbf{Variational Auto-Encoder.} We have a probabilistic model \(\pof{x,z} \coloneqq \pcof{\theta}{z} \pcof{\theta}{x \given z}\) of observed \(X\) given some hidden latent variable \(Z\) with parameters \(\theta\). Usually \(\pcof{\theta}{z}\) is fixed as \(\pof{z}\) and follows a simple distribution: a unit Gaussian, for example. We desire to learn a variational approximation \(\qcof{\phi}{z \given x}\) with parameters \(\phi\) of \(\pof{z \given x}\),\yarin{$\{z_i\}_N | \{x_i\}_N$
} where the latter might be intractable. \(\pof{x}\) is only available implicitly through the available training data, which means that we can sample from it but not compute the density directly. This is where the ELBO comes in.

\textbf{ELBO.} Following\yarin{also Shakir's DGM paper from the same year} \citet{kingma2014autoencoding}, we minimize a forward KL divergence as variational objective: when \(\Kale{\qcof{\phi}{Z \given X}}{\pof{Z \given X}} = 0\), we also\yarin{almost everywhere
(from memory, this is equality in dist)
can also use
$=^D$} have \(\qcof{\phi}{z \given x} = \pof{z \given x}\). In this case, we have found a consistent variational approximation.
But in general this does not hold.
Depending on the quality of the approximation, we can draw approximate samples of \(\pof{x}\) by first sampling \(z \sim \pcof{\theta}{z}\) and then sampling \(x \sim \pcof{\theta}{x \given z}\).
\begin{proposition}
Minimizing\yarin{explain to reader that you're gonna give a derivation using your notation

prob will be good to give the standard derivation as well to contrast to yours} the forward KL divergence \(\Kale{\qcof{\phi}{Z \given X}}{\pof{Z \given X}} \ge 0\) is equivalent to maximizing (the left-hand side in) the evidence lower bound \(\E{\pof{x} \qcof{\phi}{z \given x}}{\log{\pcof{\theta}{x \given z}}} - \Kale{\qcof{\phi}{Z \given X}}{\pcof{\theta}{Z}} \le \simpleE{\pof{x}}\log{\pof{x}}\).
\end{proposition}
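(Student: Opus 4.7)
The plan is to derive the bound in a single symbolic manipulation by expanding the intractable term $\Kale{\qcof{\phi}{Z \given X}}{\pof{Z \given X}}$ using Bayes' theorem applied to the model posterior. Under the paper's convention, the conditional KL is an expectation over the joint formed from $\pof{x}$ and the variational conditional $\qcof{\phi}{z \given x}$, so the starting point is $\Kale{\qcof{\phi}{Z \given X}}{\pof{Z \given X}} = \simpleE{\pof{x}\qcof{\phi}{z \given x}} \ICof{\pof{z \given x}/\qcof{\phi}{z \given x}}$.

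First I would apply Bayes inside the information content to substitute $\pof{z \given x} = \pcof{\theta}{z}\pcof{\theta}{x \given z}/\pof{x}$. Then, using the log-additivity rules from Proposition \ref{prop:xe_kale_rules}, I would split the resulting expression into three summands: a prior-regularization term $\Kale{\qcof{\phi}{Z \given X}}{\pcof{\theta}{Z}}$, a reconstruction term $-\simpleE{\pof{x}\qcof{\phi}{z \given x}}\log\pcof{\theta}{x \given z}$, and the evidence $\simpleE{\pof{x}}\log\pof{x}$ (where the inner expectation over $z$ on the latter collapses since the integrand no longer depends on $z$).

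Rearranging then gives the identity $\simpleE{\pof{x}}\log\pof{x} = \mathrm{ELBO}(\theta,\phi) + \Kale{\qcof{\phi}{Z \given X}}{\pof{Z \given X}}$, where $\mathrm{ELBO}(\theta,\phi)$ is precisely the left-hand side in the proposition. Since the forward KL is non-negative, the ELBO upper-bounds the marginal log-evidence, yielding the inequality. Moreover, because $\simpleE{\pof{x}}\log\pof{x}$ does not depend on $\phi$, minimizing the forward KL in $\phi$ is equivalent to maximizing the ELBO in $\phi$, which establishes the equivalence claim.

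The main obstacle I expect is bookkeeping of the conditioning structure under the paper's conditional-KL convention --- in particular, ensuring the outer expectation is taken with respect to $\pof{x}$ (so that the evidence term $\simpleE{\pof{x}}\log\pof{x}$ emerges cleanly as a constant in $\phi$) rather than some mismatched joint. Once this is pinned down, the full derivation is just one application of Bayes plus log-additivity, delivering the ``single (relatively long) line'' derivation advertised in the introduction; no Jensen's inequality is needed, in contrast to the standard exposition.
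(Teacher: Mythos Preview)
Your proposal is correct and follows essentially the same route as the paper: expand the forward KL via Bayes' theorem on $\pof{z \given x}=\pcof{\theta}{z}\pcof{\theta}{x\given z}/\pof{x}$, use the log-additivity rules of \Cref{prop:xe_kale_rules} to split into the reconstruction cross-entropy, the prior KL, and the evidence term, then rearrange and invoke non-negativity of the KL. The only cosmetic difference is that the paper carries out the manipulation at the level of the $\CrossEntropy{\cdot}{\cdot}$ and $\xHof{\cdot}$ operators before unpacking to expectations, whereas you work with the expectation $\simpleE{\pof{x}\qcof{\phi}{z\given x}}$ explicitly from the start; the bookkeeping concern you flag about the outer $\pof{x}$-expectation is exactly what the paper's conditional-KL convention encodes.
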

\begin{proof}
  We begin with an information-theoretic deduction which is straightforward using Bayes' theorem and the rules in \Cref{prop:xe_kale_rules}:
  \begin{align*}
    0 &\le \Kale{\qcof{\phi}{Z \given X}}{\pof{Z \given X}} \\
    &= \CrossEntropy{\qcof{\phi}{Z \given X}}{\underbrace{\pof{Z \given X}}_{
      = \frac{\pcof{\theta}{X \given Z} \pcof{\theta}{Z}}{\pof{X}}
    }} - \xHof{\qcof{\phi}{Z \given X}} \\
    &= \CrossEntropy{\qcof{\phi}{Z \given X}}{\pcof{\theta}{X \given Z}} + \CrossEntropy{\qcof{\phi}{Z \given X}}{\pcof{\theta}{Z}} \\
    & \quad - \xHof{\pof{X}} - \xHof{\qcof{\phi}{Z \given X}} \\
    &= \CrossEntropy{\qcof{\phi}{Z \given X}}{\pcof{\theta}{X \given Z}}  \\
    & \quad + \Kale{\qcof{\phi}{Z \given X}}{\pcof{\theta}{Z}} - \xHof{\pof{X}}
  \end{align*}
  Finally, we can rearrange and expand the definitions to obtain the ELBO:
  \begin{align*}
    \xHof{\pof{X}} &\le \E{\pof{x} \qcof{\phi}{z \given x}}{\ICof{\pcof{\theta}{x \given z}}} \\
    &\quad + \Kale{\qcof{\phi}{Z \given X}}{\pcof{\theta}{Z}} \\
    \Leftrightarrow \simpleE{\pof{x}}\log{\pof{x}} & \ge \simpleE{\pof{x}}\E{\qcof{\phi}{z \given x}}{\log{\pcof{\theta}{x \given z}}} \\
    & \quad - \Kale{\qcof{\phi}{Z \given X}}{\pcof{\theta}{Z}}.
  \end{align*}
\end{proof}
From an information-theoretic perspective, the ELBO is actually an \emph{upper-bound} on the\yarin{and therefore...
[why do we care about this insight? give some followups that we can deduce from this view / derivation]} entropy of the inputs: 
\begin{align*}
  \Hof{X} \le &\CrossEntropy{\qcof{\phi}{Z \given X}}{\pcof{\theta}{X \given Z}} \\
  & + \Kale{\qcof{\phi}{Z \given X}}{\pcof{\theta}{Z}}.
\end{align*}
Note that in comparison to \citet{kingma2014autoencoding}, we take an expectation over \(x\)\yarin{in line \#

also this is related to Tom's derivation and empirical estimate view of the ELBO} right away. The non-expected version would be 
\begin{align}
  \Hof{x} \le &\CrossEntropy{\qcof{\phi}{Z \given x}}{\pcof{\theta}{x \given Z}} \\
    & + \Kale{\qcof{\phi}{Z \given x}}{\pcof{\theta}{Z}}.
\end{align}

\begin{nextversion}
\textbf{Information-Theoretic Deduction of the ELBO.} Following appendix F.5 in \citet{kirsch2020unpacking} we can also easily deduce the ELBO in a different more information-theoretic way. 

For this, we assume the probabilistic model \(\pof{x, z}\) with \(\pof{x}\) determined by the given data distribution and \(\pof{z}\) fixed to a unit Gaussian. We want to approximate \(\pof{x \given z}\) using a variational distribution \(\pcof{\theta}{x \given z}\) and \(\pof{z \given x}\) using \(\pcof{\phi}{z \given x}\).

TK TK NOT CLEAR TK TK 
\end{nextversion}

\section{Example Application: Variational Inference, Active Learning, and Core-Set Methods}

\yarin{split into separate chapters}
\yarin{ok not to move to bg
but be sure to be clear where the BG ends and your contribution begins

so have a BG subsection for each chapter}
We start by briefly revisiting Bayesian deep learning, variational inference, and the evidence-lower-bound inequality, before introducing active learning and defining the Core-Set by Disagreement acquisition function.

\textbf{Probabilistic Model.}
The model parameters are treated as a random variable $\W$ with prior distribution $\pof{\w}$. We denote the training set $\Dtrain=\{ ( \xtrain_i,\ytrain_i) \}_{1, \ldots, i \in |\Dtrain|}$, where $\xtrainsetfull$ are the input samples and $\ytrainsetfull$ the labels or targets.
The \emph{probabilistic model} is as follows:
\begin{equation}
    \pof{y, x, \w} = \pof{y \given x, \w} \, \pof{\w} \, \pof{x},
\end{equation}
where $x$, $y$, and $\omega$ are outcomes for the random variables $X$, $Y$, and $\Omega$ denoting the input, label, and model parameters, respectively. %

To include multiple labels and inputs, we expand the model to joints of random variables $\xsetfull$ and $\ysetfull$ obtaining
\begin{align}
    \pof{\yset,\xset,\w}=\prod_{i \in I} \pof{y_i \given x_i, \w} \pof{x_i} \pof{\w}.
\end{align}
We are only interested in discriminative models and thus do not explicitly model $\pof{x}$.

The posterior parameter distribution $\pof{\w \given \Dtrain}$ is determined via Bayesian inference. We obtain $\pof{\w \given \Dtrain}$ using Bayes' theorem:
\begin{equation}
    \pof{\w \given \Dtrain} \propto \pof{\ytrainset \given \xtrainset, \w} \pof{\w}.
\end{equation}
which allows for predictions by marginalizing over $\W$:
\begin{equation}
    \pof{y \given x, \Dtrain} = 
    \simpleE{\w \sim \pof{\w \given \Dtrain}}{\pof{y \given x, \w}}.
\end{equation}

\textbf{Variational Inference \& ELBO.} %
Exact Bayesian inference is intractable for complex models, and we use variational inference for approximate inference using a variational distribution $\qof{\w}$. We can determine $\qof{\w}$ by minimizing the following KL divergence:
\begin{multline}
    \Kale{\qof{\w}}{\pof{\w \given \Dtrain}} = \\ 
    = \underbrace{\CrossEntropy{\qof{\w}}{\pof{\ytrainset \given \xtrainset, \omega}}}_\text{likelihood} \\
    + \underbrace{\Kale{\qof{\w}}{\pof{\w}}}_\text{prior regularization} + \underbrace{\log \pof{\Dtrain}}_\text{model evidence} \ge 0,
\end{multline}
where we used Bayes' theorem. We proof this using the notation from this paper as an application:
\begin{proposition}
  Minimizing the forward KL divergence \(\Kale{\qof{\w}}{\pof{\w \given \Dtrain}}\) is equivalent to maximizing the evidence lower-bound (ELBO) \(\sum_i \simpleE{\qof{\w}}{\log \pof{\ytrain_i \given \xtrain_i, \w}} - \Kale{\qof{\w}}{\pof{\w}}\).
\end{proposition}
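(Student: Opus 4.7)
The plan is to mirror the derivation used earlier for the VAE's ELBO, but applied to the posterior over parameters \(\W\) given \(\Dtrain\). I will start from the non-negativity of \(\Kale{\qof{\w}}{\pof{\w \given \Dtrain}}\), expand the posterior via Bayes' theorem into \(\pof{\ytrainset \given \xtrainset, \w}\pof{\w}/\pof{\Dtrain}\), and then use the rules from \Cref{prop:xe_kale_rules} to split the resulting cross-entropy into interpretable pieces.

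Concretely, I would first write
\begin{align*}
  \Kale{\qof{\w}}{\pof{\w \given \Dtrain}}
  &= \CrossEntropy{\qof{\w}}{\pof{\w \given \Dtrain}} - \xHof{\qof{\w}}.
\end{align*}
Then I would substitute Bayes' theorem inside the cross-entropy and use the log-linearity / product rules for \(\CrossEntropy{\cdot}{\cdot}\) from \Cref{prop:xe_kale_rules} to obtain
\begin{align*}
  \CrossEntropy{\qof{\w}}{\pof{\w \given \Dtrain}}
  &= \CrossEntropy{\qof{\w}}{\pof{\ytrainset \given \xtrainset, \w}} \\
  &\quad + \CrossEntropy{\qof{\w}}{\pof{\w}} + \ICof{\pof{\Dtrain}},
\end{align*}
where I am implicitly conditioning on the deterministic inputs \(\xtrainset\) in the evidence term. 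Combining and regrouping with the entropy of \(\qof{\w}\) yields
\begin{align*}
  \Kale{\qof{\w}}{\pof{\w \given \Dtrain}}
  &= \CrossEntropy{\qof{\w}}{\pof{\ytrainset \given \xtrainset, \w}} \\
  &\quad + \Kale{\qof{\w}}{\pof{\w}} - \log \pof{\Dtrain}.
\end{align*}

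Finally I would observe that \(\log \pof{\Dtrain}\) does not depend on \(\qof{\w}\), so minimizing the left-hand side over \(\qof{\w}\) is equivalent to minimizing the sum of the first two terms, equivalently maximizing their negation. Expanding the cross-entropy \(\CrossEntropy{\qof{\w}}{\pof{\ytrainset \given \xtrainset, \w}} = -\sum_i \simpleE{\qof{\w}}{\log \pof{\ytrain_i \given \xtrain_i, \w}}\) using conditional independence of labels given \(\w\) delivers exactly the ELBO in the stated form.

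The only mildly delicate step is the factorization of the likelihood across training points: I would justify it by recalling from the probabilistic model that \(\pof{\ytrainset \given \xtrainset, \w} = \prod_i \pof{\ytrain_i \given \xtrain_i, \w}\), so that the cross-entropy decomposes additively via the product rule in \Cref{prop:xe_kale_rules}. Everything else is essentially a copy of the VAE argument, so I do not anticipate any real obstacle.
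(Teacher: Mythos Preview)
Your approach is exactly the paper's: expand the KL as cross-entropy minus entropy, push Bayes' theorem through the second slot of the cross-entropy using \Cref{prop:xe_kale_rules}, regroup into \(\Kale{\qof{\w}}{\pof{\w}}\), observe that the evidence term is constant in \(\opq\), and factor the likelihood. The only issue is a sign slip in your intermediate display: dividing by \(\pof{\Dtrain}\) inside the cross-entropy contributes \(\ICof{1/\pof{\Dtrain}} = +\log \pof{\Dtrain} = -\ICof{\pof{\Dtrain}}\), so the correct identity is
\[
\Kale{\qof{\w}}{\pof{\w \given \Dtrain}} = \CrossEntropy{\qof{\w}}{\pof{\ytrainset \given \xtrainset, \w}} + \Kale{\qof{\w}}{\pof{\w}} + \log \pof{\Dtrain},
\]
not \(-\log \pof{\Dtrain}\). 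This does not affect the stated equivalence (the evidence is constant in \(\opq\)), but it matters if you also want to read off the actual lower-bound inequality \(\log \pof{\Dtrain} \ge \text{ELBO}\) as the paper does.
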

\begin{proof}
  We start with the information-theoretic deduction which is straightforward using Bayes' theorem and the rules\yarin{again, be clear that you give a derivation using your notation

  and explain to the reader why they should care about this derivation - eg it allows you to give new insights, or you will use this derivation to connect this to something new (which I think is missing in the writeup atm)} in \Cref{prop:xe_kale_rules}:
  \begin{align*}
    & 0 \le \Kale{\qof{\w}}{\pof{\w \given \Dtrain}} = \\
    & \quad = \CrossEntropy{\qof{\w}}{\pof{\w \given \Dtrain}} - \xHof{\qof{w}} \\
    & \quad = \CrossEntropy{\qof{\w}}{\frac{\pof{\ytrainset \given \xtrainset, \w} \pof{\w}}{\pof{\Dtrain}}} - \xHof{\qof{w}} \\
    & \quad = \CrossEntropy{\qof{\w}}{\pof{\ytrainset \given \xtrainset, \w}} \\
    & \quad \quad + \CrossEntropy{\qof{\w}}{\pof{\w}} - \xHof{\qof{w}} - \xHof{\pof{\Dtrain}} \\
    & \quad = \CrossEntropy{\qof{\w}}{\pof{\ytrainset \given \xtrainset, \w}} \\
    & \quad \quad + \Kale{\qof{\w}}{\pof{\w}} + \ICof{\pof{\Dtrain}}.
  \end{align*}
  It thus follows:
  \begin{align*}
    \ICof{\pof{\Dtrain}} &\le \CrossEntropy{\qof{\w}}{\pof{\ytrainset \given \xtrainset, \w}} \\
    & \quad  + \Kale{\qof{\w}}{\pof{\w}}.
  \end{align*}
  Expanding the definitions, we obtain
  \begin{align*}
    - \log \pof{\Dtrain} \le &\E{\qof{\w}}{-\log \prod_i \pof{\ytrain_i \given \xtrain_i, \w}} \\
      &+ \Kale{\qof{\w}}{\pof{\w}},
  \end{align*}
  and after some rearranging, the ELBO surfaces:
  \begin{align*}
    \log \pof{\Dtrain} \ge &\sum_i \E{\qof{\w}}{\log \pof{\ytrain_i \given \xtrain_i, \w}} \\
    &- \Kale{\qof{\w}}{\pof{\w}},
  \end{align*}
  with equality exactly\yarin{this section feels incomplete; also $=^D$} when \(\qof{\w} = \pof{\w \given \Dtrain}\).
\end{proof}
For Bayesian deep learning models,\yarin{this section feels incomplete} we can use the local reparameterization trick or Monte-Carlo dropout for $\qof{\w}$ \citep{kingma2015variational, gal2016dropout}.

\textbf{Active Learning.}
In active learning, we have access to an unlabelled pool set $\Dpool=\xpoolsetfull$. We iteratively acquire batches of samples $\xbatchset$ from the pool set into the training set by acquiring labels for them through an oracle and retrain our model.
We repeat these steps until the model satisfies our performance requirements.

To determine which samples to select for acquisition, 
we score candidate acquisition batches $\xbatchset$ with the \emph{acquisition batch size} b using an acquisition function $a(\xbatchset, \pof{\W \given \Dtrain})$ and pick the highest scoring one:
\begin{equation}
    \underset{\displaystyle\{\xbatch_i\}_{i \in\{1,\ldots,b\}} \subseteq \Dpool}{\argmax} a(\xbatchset, \pof{\W \given \Dtrain})
\end{equation}
BALD was originally introduced as a one-sample acquisition function of the expected information gain between the prediction $\Ybatch$ for a candidate input $\xbatch$ and the model parameters $\W$:
\begin{math}
  \MIof{\W; \Ybatch \given \xbatch, \Dtrain}.
\end{math}
In BatchBALD \citep{kirsch2019batchbald}, this one-sample case was canonically extended to the batch acquisition case using the expected information gain between the \emph{joint} of the predictions $\Ybatchset$ for the batch candidates $\xbatchset$ and the model parameters $\W$:
\begin{multline}
    a_\text{BALD}(\xbatchset, \pof{\W \given \Dtrain}) := \\
    = \MIof{\W ; \Ybatchset \given \xbatchset, \Dtrain}
\end{multline}

\textbf{Notation.}
Instead of $\Yevalset$, $\xevalset$, we will write $\boldsymbol{\Yeval}$, $\boldsymbol{\xeval}$ and so on to to cut down on notation. Like above, all terms can be canonically extended to sets by substituting the joint. 
Lower-case variables like $\yeval$ are outcomes of random variables while upper-case variables like $\Yeval$ are random variables. The datasets $\Dpool, \Dtrain$ are sets of outcomes.

{
\renewcommand{\xevalset}{\boldsymbol{\xeval}}
\renewcommand{\xtestset}{\boldsymbol{\xtest}}
\renewcommand{\xtrainset}{\boldsymbol{\xtrain}}
\renewcommand{\xbatchset}{\boldsymbol{\xbatch}}
\renewcommand{\xpoolset}{\boldsymbol{\xpool}}

\renewcommand{\Xevalset}{\boldsymbol{\Xeval}}
\renewcommand{\Xtestset}{\boldsymbol{\Xtest}}
\renewcommand{\Xtrainset}{\boldsymbol{\Xtrain}}
\renewcommand{\Xbatchset}{\boldsymbol{\Xbatch}}
\renewcommand{\Xpoolset}{\boldsymbol{\Xpool}}

\renewcommand{\yevalset}{\boldsymbol{\yeval}}
\renewcommand{\ytestset}{\boldsymbol{\ytest}}
\renewcommand{\ytrainset}{\boldsymbol{\ytrain}}
\renewcommand{\ybatchset}{\boldsymbol{\ybatch}}
\renewcommand{\ypoolset}{\boldsymbol{\ypool}}

\renewcommand{\Ytestset}{\boldsymbol{\Ytest}}
\renewcommand{\Ytrainset}{\boldsymbol{\Ytrain}}
\renewcommand{\Yevalset}{\boldsymbol{\Yeval}}
\renewcommand{\Ybatchset}{\boldsymbol{\Ybatch}}
\renewcommand{\Ypoolset}{\boldsymbol{\Ypool}}

\subsection{BALD $\to$ Core-Set by Disagreement}
\yarin{for the thesis, you can also add the paper with Tom as another chapter!} 
We examine BALD through the lens of our new notation and\yarin{use our notation to draw links to a different problem - core set selection. This allows us to develop ..} develop CSD as information gain. First, we note that BALD does not optimize the loss of the test distribution to become minimal. It does not try to pick labels which minimize the generalization loss.

BALD maximizes the expected information gain: $\MIof{\W; \Ybatchset \given \xbatchset, \Dtrain} = \Hof{\W} - \Hof{\W \given \Ybatchset, \xbatchset, \Dtrain}$. We assume that our Bayesian model contains the true generating model parameters and by selecting samples that minimize the uncertainty $\Hof{\W \given \Ybatchset, \xbatchset, \Dtrain}$, the model parameters will converge towards these true parameters as $\Hof{\W \given \Dtrain} \to 0$.

\textbf{BALD as an Approximation.}
BALD as the expected information gain is the expectation of the information gain over the current model's predictions for $x$:
\begin{equation}
  \MIof{\W; Y \given x, \Dtrain} = \simpleE{\pof{y \given x, \Dtrain}} \MIof{\W; y \given x, \Dtrain}.
\end{equation}
Using the definition,\yarin{hence
$E_p(y|..) pmi = MI$} we have:
\begin{align}
  \MIof{\W; y \given x, \Dtrain} = \Hof{\W \given \Dtrain} - \Hof{\W \given y, x, \Dtrain}. \label{eq:ig}
\end{align}
That is, we can view BALD as weighting the information gains $\MIof{\W; y \given x, \Dtrain}$ for different $y$ by the current model's belief that $y$ is correct.\andreas{cite Active Testing}\yarin{We can use this connection to draw multiple useful insights.
[put the next few sentences into their own para / subsections and give more info]}
If we had access to the labels or a better surrogate distribution for the labels, we could improve on this. This could in particular help with the cold starting problem\yarin{we can give some early exps showing this} in active learning when one starts training with no initial training set and the model predictions are not trustworthy at all.\andreas{add ablation}
When we have access to the labels, we can directly use the \emph{information gain} $\MIof{\W; y^\text{true} \given x, \Dtrain}$ and select the samples using a \emph{Core-Set by Disagreement} acquisition function:
\begin{multline}
  a_\text{CSD}(\ybatchset, \xbatchset, \pof{\W \given \Dtrain}) := \\
    = \MIof{\W ; \ybatchset \given \xbatchset, \Dtrain}
\end{multline}
\textbf{Evaluating the Information Gain.} We show how to compute the information for the special case of an MC dropout model with dropout rate $\tfrac{1}{2}$. Computing the information gain for other models is not trivial as it usually requires an explicit density model. Most approximate Bayesian neural networks, such as Monte-Carlo dropout models, only provide implicit models, which we can sample from but which do not provide a way to approximate the posterior density. 
Moreover, to compute $\Hof{\W \given y, x, \Dtrain}$ naively we would have to perform a Bayesian inference step. We can rewrite, however:
\begin{align}
  &\MIof{\W; y \given x, \Dtrain} \notag \\
  &=\Hof{\W \given \Dtrain} - \Hof{\W, y \given x,\Dtrain} + \Hof{y \given x, \Dtrain} \\
  &=\Hof{\W \given \Dtrain} + \Hof{y \given x, \Dtrain} \notag \\
  &\quad - \simpleE{\pof{\w \given y, x, \Dtrain}} \Hof{\w, y \given x,\Dtrain}. \label{eq:expanded_bald}
\end{align}
We can expand $\simpleE{\pof{\w \given y, x, \Dtrain}} \Hof{\w, y \given x,\Dtrain}$ to:
\begin{align}
  &\simpleE{\pof{\w \given y, x, \Dtrain}} \Hof{\w, y \given x,\Dtrain} \notag \\
  &= \E{\pof{\w \given y, x, \Dtrain}}{{\Hof{\w \given \Dtrain}} + {\Hof{y \given x, \w, \Dtrain}}} \\
  &= \E{\pof{\w \given y, x, \Dtrain}}{{\Hof{\w \given \Dtrain}}} + {\Hof{y \given x, \Omega, \Dtrain}}.
\end{align}
Plugging everything into the \eqref{eq:expanded_bald} and rearranging, we obtain:
\begin{align}
  &\MIof{\W; y \given x, \Dtrain} = \notag \\
  & = \underbrace{\simpleE{\pof{\w \given x, \Dtrain}} \Hof{\w \given \Dtrain} - \simpleE{\pof{\w \given y, x, \Dtrain}} \Hof{\w \given \Dtrain}}_{\Circled{3}} \notag \\
  & \quad + \underbrace{\Hof{y \given x, \Dtrain} - \Hof{y \given x, \W, \Dtrain}}_{\displaystyle =\MIof{y ; \W \given x, \Dtrain}}.
\end{align}
To compute $\Hof{y \given x, \W, \Dtrain}$, we use importance sampling:
\begin{align}
  &\Hof{y \given x, \W, \Dtrain} = \simpleE{\pof{\w \given y, x, \Dtrain}} \Hof{y \given x, \w} = \notag \\
  & \quad = \simpleE{\pof{\w \given \Dtrain}} \frac{\pof{\w \given y, x, \Dtrain}}{\pof{w \given \Dtrain}} \Hof{y \given x, \w} \notag \\
  & \quad =\simpleE{\pof{\w \given \Dtrain}} \frac{\pof{y \given x, \w}}{\pof{y \given x, \Dtrain}} \Hof{y \given x, \w}
\end{align}
Finally, if we use Monte-Carlo dropout with dropout rate $\tfrac{1}{2}$ to obtain a variational model distribution $\qof{\w}$, we have $\qof{\w} = \text{const}$, and we can approximate \Circled{3} as:
\begin{align}
  & \simpleE{\pof{\w \given \Dtrain}} \Hof{\w \given \Dtrain} - \simpleE{\pof{\w \given y, x, \Dtrain}} \Hof{\w \given \Dtrain} = \notag \\
  & \quad \approx \simpleE{\pof{\w \given \Dtrain}} \ICof{\qof{\w}} - \simpleE{\pof{\w \given y, x, \Dtrain}} \ICof{\qof{\w}} \notag \\
  & \quad = \ICof{\qof{\w}} - \ICof{\qof{\w}} = 0.
\end{align}
In this special case, we indeed have $\MIof{\W; y \given x, \Dtrain} = \MIof{y; \W \given x, \Dtrain}$. We can use the surprise to approximate the information gain. 

As an example, \Cref{fig:coresetbald_mnist} shows that CSD strongly outperforms BALD on MNIST in this setup. This approximation is brittle, however. We study this in experiments in \S\ref{sec:csd_experiments} in the appendix.

\begin{figure}[t]
  \centering
  \includegraphics[width=\linewidth]{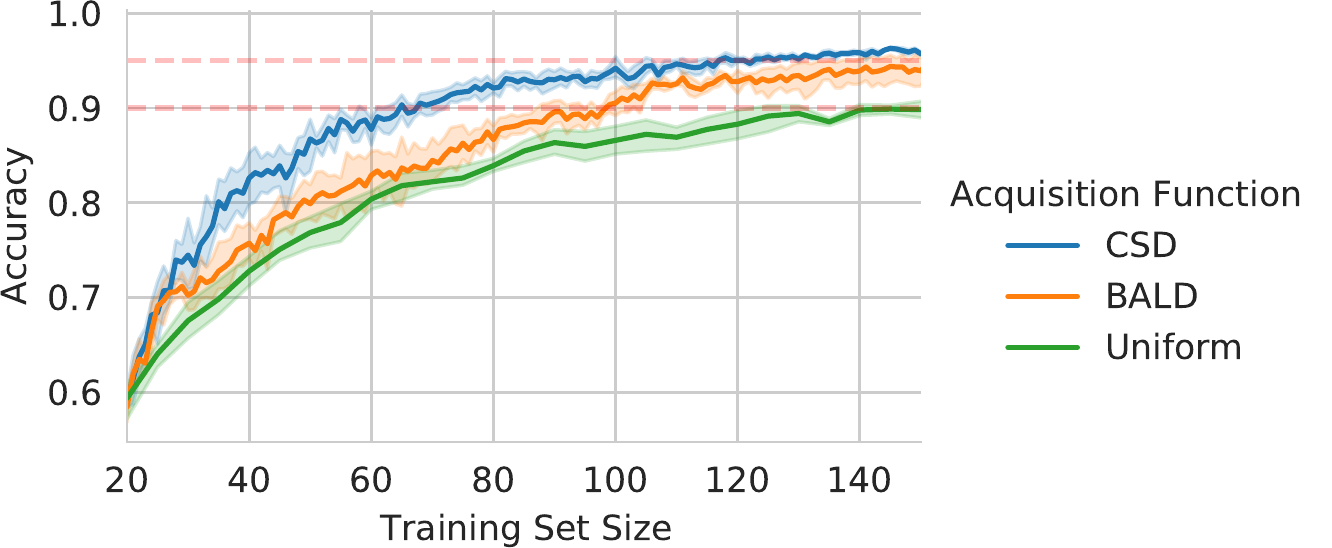}
  \caption{
    \emph{CSD vs BALD vs uniform acquisition on MNIST after ambiguous and mislabeled training samples have been removed from the training set.}\yarin{I would expect to see comparison to other coreset approaches - why is BALD an appropriate baseline here?} CSD requires only 58 samples to reach 90\% accuracy compared to 91 samples for BALD. 5 trials each. Dashed horizontal lines at 90\% and 95\% accuracy.\andreas{move this to the CSD section and the rest into the appendix.}
  }
  \label{fig:coresetbald_mnist}
\end{figure}

\section{Conclusion}

We have introduced a unified notation for information-theoretic quantities for both random variables and outcomes. We have also unified information gain and surprise by defining the mutual information appropriately. Finally, we have examined applications of our notation which show potential avenues for future research. This shows that our notation allows for new perspectives on well-known problems that simplify thinking about them---a strong signal that it is a useful abstraction.

\section*{Acknowledgements}

The authors would like to thank Joost van Amersfoort,\yarin{I think Lewis could give some useful feedback on your derivations as well!} Tim Rudner, Ravid Shwartz-Ziv, Clare Lyle, as well as the members of OATML in general for their feedback at various stages of the project. AK is supported by the UK EPSRC CDT in Autonomous Intelligent Machines and Systems (grant reference EP/L015897/1).

\bibliographystyle{plainnat}
\bibliography{references}

\clearpage
\appendix

\section{On CoreSet-by-Disagreement}
\label{sec:csd_experiments}
\subsection{Experiments}

\textbf{MNIST.} We implement CSD and evaluate it on MNIST to show that it can identify a core-set of training samples that achieves high accuracy and low loss.

CSD is very sensitive to mislabeled samples because we compute the information gain using the provided labels: if a sample is mislabeled and the model has high confidence for the true label already, it will necessary have a very high information gain and the model will acquire this mislabeled sample.

To avoid this, we train a LeNet ensemble with 5 models on MNIST and discard all training samples with predictive entropy $> 0.01$ nats and whose labels do not match the predictions. This removes about 5678 samples from the training set.%

We use a LeNet model \citep{lecun1998gradient} with MC dropout (dropout rate $\tfrac{1}{2}$) in the core-set setting where we have access to labels but otherwise use an active learning setup. We use individual acquisition and compare to BALD, which does not make use of label information, and which we use as a sanity baseline. The training regime follows the one described in \citet{kirsch2019batchbald}. %

\begin{figure}[t]
  \centering
  \includegraphics[width=\linewidth]{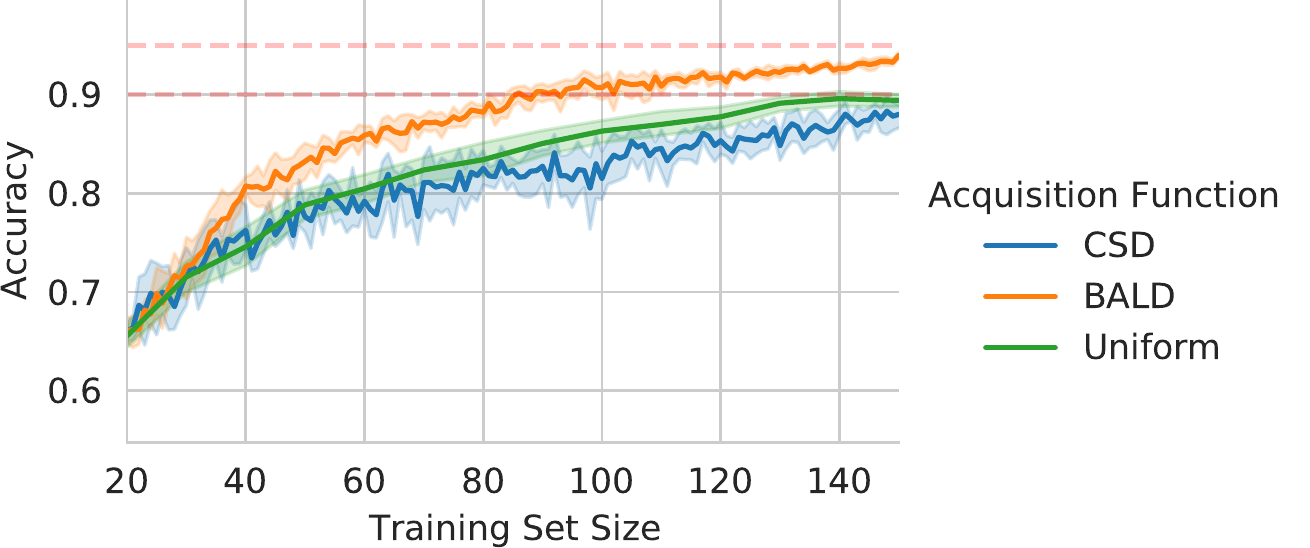}
  \caption{
    \emph{\emph{Ablation with ambiguous and mislabeled training samples included}: CSD vs BALD vs uniform acquisition on MNIST.} CSD performs worse than uniform acquisition. 5 trials each.
  }
  \label{fig:coresetbald_mnist_label}
\end{figure}

\begin{figure}[t]
  \centering
  \includegraphics[width=\linewidth]{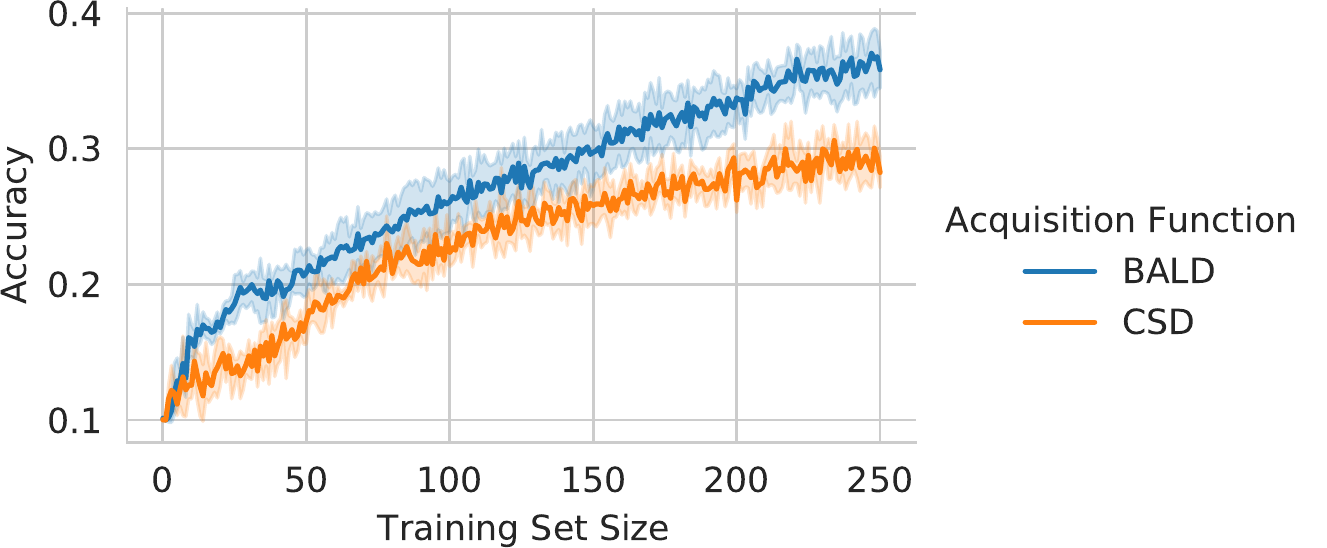}
  \caption{
    \emph{CSD vs BALD on CIFAR-10 without ambiguous and mislabeled training samples have been removed.} CSD performs worse than BALD. 5 trials each.
  }
  \label{fig:coresetbald_cifar10_clean_cold}
\end{figure}

\begin{figure}[t]
  \centering
  \includegraphics[width=\linewidth]{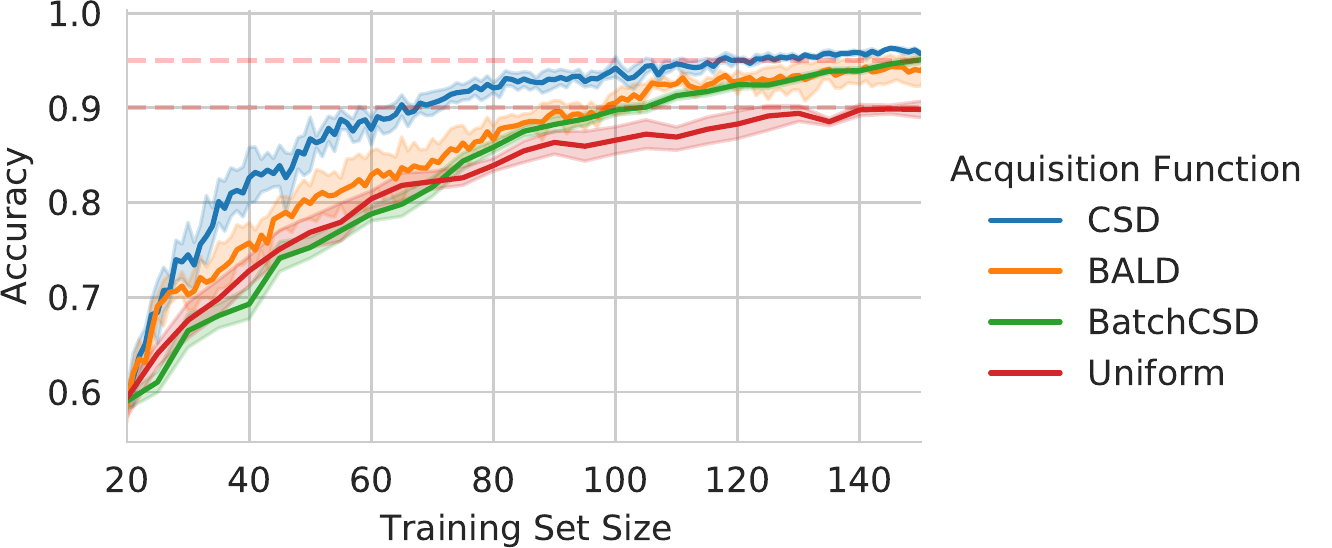}
  \caption{
    \emph{Ablation BatchCSD vs CSD (vs BALD vs uniform acquisition) on MNIST.} BatchCSD performs worse than BALD. 5 trials each.
  }
  \label{fig:coresetbald_mnist_clean_batch}
\end{figure}

\begin{table}[t]
  \centering
  \renewcommand{\arraystretch}{1.2}
  \caption{\emph{25\%/50\%/75\% quantiles for reaching 90\% and 95\% accuracy on MNIST.} 5 trials each.}
  \begin{tabular}{@{}ccc@{}}
  \toprule
  Acquisition Function & 90\% Acc    & 95\% Acc    \\ \midrule
  Uniform               & 125/130/150 & ---         \\
  BALD                 & 88/91/99    & 130/145/167 \\
  \textbf{CSD (ours)} & 55/\textbf{58}/58    & 105/\textbf{111}/115 \\
  \bottomrule
  \end{tabular}
  \label{tbl:mnist}
\end{table}

\Cref{fig:coresetbald_mnist} shows that CSD strongly outperforms BALD on MNIST (both with individual acquisitions). Indeed, only 58 samples are required to reach 90\% accuracy on average and 111 samples for 95\% accuracy compared to BALD which needs about 30 samples more in each case; see also \Cref{tbl:mnist}.

In \Cref{fig:coresetbald_mnist_label}, we show an ablation of using CSD without removing mislabeled or ambiguous samples from the training set. Here, BALD (without label information) outperforms CSD, which shows that CSD suffers from mislabeled examples.

\textbf{CIFAR-10.} However, we cannot produce the same results on cleaned CIFAR-10 (similar like MNIST described above) with ResNet18 models and MC dropout. BALD performs much better than CSD, even when cold starting. The accuracy plot is depicted in \Cref{fig:coresetbald_cifar10_clean_cold}. This indicates that something is wrong. We have not been able to identify the issue yet.

\textbf{BatchCSD.} Finally, we examine an extension of CSD to the batch case following \citet{kirsch2019batchbald} and compute $\MIof{\W ; \ybatchset \given \xbatchset}$ using the approximation $\MIof{\ybatchset ; \W \given \xbatchset}$. This approximation does not work well in the batch case, however,  even for a batch acquisition size of 5, as depicted in \cref{fig:coresetbald_mnist_clean_batch} (on MNIST). BatchCSD performs worse than Uniform for $\approx70$ samples and worse than BALD for 150 samples. A reason for this could be that the information gain and thus CSD are not submodular. This means that the sequential selection of acquisition (batch) samples has no optimality guarantee, unlike with BALD \citep{kirsch2019batchbald}.

\FloatBarrier

\subsection{Limitations of our Implementation \& Approach}
We have used our proposed notation to reinterpret BALD as the expected information gain and found an approximation for the information gain which allowed use to introduce CSD and show that it works on MNIST. But we have not been able to provide good results for CIFAR-10 or successfully extend our approximation to the batch case.
Moreover, the approximation we have used only works for MC dropout with dropout rate $\tfrac{1}{2}$. Our approach requires an explicit model, otherwise. Importantly, unlike BALD, the information gain in CSD does also not seem to be submodular, and we cannot infer a $1-\tfrac{1}{e}$ optimality that way \citep{kirsch2019batchbald}---although BALD's submodularity and optimality is not tied to the generalization loss anyway.

}
\end{document}